\documentclass{article} 
\usepackage{iclr2027_conference,times}

\usepackage{amsmath,amsfonts,bm}

\def\eqref#1{equation~\ref{#1}}

\def\1{\bm{1}}

\def\rmR{{\mathbf{R}}}

\DeclareMathAlphabet{\mathsfit}{\encodingdefault}{\sfdefault}{m}{sl}
\SetMathAlphabet{\mathsfit}{bold}{\encodingdefault}{\sfdefault}{bx}{n}

\def\gE{{\mathcal{E}}}

\usepackage{hyperref}
\usepackage{url}
\usepackage{amsthm}
\usepackage{algorithm}
\newtheorem{theorem}{Theorem}
\newtheorem{proposition}{Proposition}

\usepackage{booktabs}
\usepackage{wrapfig}
\usepackage{algpseudocode}
\usepackage{wrapfig}
\usepackage{caption}

\usepackage{siunitx}

\usepackage[table]{xcolor}

\definecolor{scorebest}{HTML}{EAF4CC}
\definecolor{scoregood}{HTML}{DDF1D8}
\definecolor{scoremid}{HTML}{D9ECDF}
\definecolor{scorelow}{HTML}{D6E3EF}
\definecolor{scorebad}{HTML}{DDD2EE}

\newcommand{\ours}{H-Glow}
\newcommand{\hazard}{\tau}
\newcommand{\jump}{\pi}

\usepackage{xcolor}

\usepackage{graphicx}
\usepackage{subcaption}
\usepackage{enumitem}

\usepackage{algorithm}
\usepackage{algpseudocode}
\usepackage{xcolor}
\usepackage{hyperref}
\usepackage{comment}
\usepackage[capitalize]{cleveref}
\crefname{equation}{Eq.}{Eqs.}
\Crefname{equation}{Eq.}{Eqs.}
\usepackage{xkcdcolors}
\usepackage{svg}
\hypersetup{
    colorlinks=true,
    linkcolor=xkcdDarkRose,
    urlcolor=xkcdBluish,
    linktoc=all,
    citecolor=xkcdFlatBlue,
}

\definecolor{ourscolor}{RGB}{232,240,252}
\definecolor{ourslight}{RGB}{240,245,252}
\definecolor{oursmain}{RGB}{220,234,250}

\algrenewcommand{\algorithmiccomment}[1]{%
  \hfill\(\triangleright\)\textcolor{gray}{ #1}%
}

\title{H-Glow: Accelerating Graph Generation via Hazard-Guided Discrete Flow Matching}

\iclrfinalcopy 

\author{%
  Sirine Ayadi, Stephan Günnemann \\
  School of Computation, Information and Technology,  Technical University of Munich \\
  Munich Data Science Institute \&
  Munich Center for Machine Learning \\
  \texttt{\{si.ayadi,s.guennemann\}@tum.de} 
}

\newcommand{\Cat}{\mathrm{Cat}}

\usepackage{natbib}
\begin{document}

\maketitle
\lhead{Preprint}
\begin{abstract}
Discrete flow matching (DFM) offers a promising approach to graph generation; however, generating high-quality graphs with few steps remains challenging due to the strong dependencies between nodes and edges. Although DFM decouples training from sampling, enabling flexible sampling strategies without retraining, existing graph DFM samplers often rely on task-specific heuristics, substantially expanding the hyperparameter space. Moreover, graph DFM models still require sampling budgets comparable to those of diffusion-based approaches, leaving their potential for accelerating graph generation only partially realized.
In this work, we introduce \ours{}, a hazard-guided framework for accelerating graph generation built on the decomposition of Continuous-Time Markov chain (CTMC) generators into hazards and conditional jump distributions. \ours{} leverages these quantities to plan graph edits and dynamically adjust the sampling time to graph activity. \ours{} can be adapted to pretrained graph DFM models, substantially improving the generation quality with fewer steps.
We further introduce a novel graph DFM model based on the conditional generator matching framework, which directly predicts hazards and conditional jumps, and show that its training objective decomposes exactly into hazard matching and hazard-weighted jump prediction. Experiments on molecular and synthetic graph benchmarks demonstrate improved generation quality with substantially fewer sampling steps. 
\end{abstract}
\section{Introduction}
Graph generative models have the potential to accelerate materials design \citep{sanchez2018inverse} and drug discovery \citep{zhang2025artificial} by designing new molecular structures.
Discrete-state graph diffusion models \citep{vignac2022digress} offer a promising approach for progressively refining categorical node and edge attributes through iterative denoising updates. However, their denoising process is defined on a fixed temporal grid, and sampling is therefore tied to the discrete time points used during training.

Continuous-time graph diffusion models overcome this restriction by learning to denoise at arbitrary time points, allowing flexible time discretization during inference \citep{campbell2022continuous,xu2024discrete,siraudin2024cometh}.
More recently, discrete flow matching (DFM) has achieved state-of-the-art performance on molecular and synthetic graph benchmarks \citep{qin2025defog}. Despite the sampling acceleration achieved by flow-based models in domains such as images \citep{geng2026mean} and text generation \citep{sevriugov2026logit}, graph flow models remain computationally costly and often require as many sampling steps as diffusion-based methods, leaving their potential for efficient generation only partially realized.

The structural complexity of graphs poses distinct challenges for efficient sampling compared with image or text generation, as node and edge attributes are strongly interdependent. For example, a single incorrect bond can violate valency constraints and invalidate a molecule.
Existing samplers update multiple nodes and edges in parallel without accounting for potential conflicts between updates, leading to violations of structural or chemical constraints. 
The approximation errors become more pronounced as the sampling budget decreases. To address this, prior work exploits the training-sampling decoupling of DFM and employs specialized sampling-time interventions to improve generation quality, including target guidance, additional stochasticity, and tailored sampling-time discretizations \citep{campbell2024generative,qin2025defog,luosimgfm}. However, these interventions expand the hyperparameter space and often require extensive tuning for each task and sampling budget during generation.

We present \textit{\ours{}}, a novel framework for graph generation built on the decomposition of CTMC generators into hazards and conditional jump distributions, as presented in \cref{fig:hglow-overview}.
\ours{} decomposes the generative process into three key components: \textbf{(i)} {hazard-guided planning} that prioritizes graph edits and coordinates node and edge updates; \textbf{(ii)} \emph{conditional jump distributions} that guide the refinement of selected graph components; and \textbf{(iii)} {a hazard-guided clock} that dynamically adjusts the sampling-time distortion to the graph’s predicted activity.
Our \ours{} sampler substantially accelerates graph generation and serves as a drop-in replacement for pretrained models with accessible CTMC generators. We further introduce the \ours{} model, a structured DFM model for graph generation trained using the generator-matching framework \citep{holderrieth2025generator}. \ours{} adopts a factorized CTMC parameterization to learn node and edge hazards, as well as conditional jump distributions \citep{li2026neural}, thereby directly supervising the quantities that guide the sampling.

We summarize our contributions as follows:
\vspace{-0.2em}
\begin{itemize}[leftmargin=1.8em,topsep=0pt, partopsep=0pt,itemsep=0pt, parsep=1pt]
    \item \textbf{Hazard-guided graph sampling.}
We introduce \ours{}, a novel sampler that effectively leverages the model's predicted hazards and jump distributions to prioritize graph edits and dynamically adapt sampling-time distortions to graph activity. Moreover, \ours{} serves as a drop-in sampler that accelerates pretrained models with accessible CTMC generators, without retraining, auxiliary networks, or complex sampling interventions.
    \item \textbf{Structured graph generator learning framework.}
    We introduce a graph DFM model that directly learns node and edge hazards, as well as conditional jump distributions, rather than relying on a denoiser network. Under the CTMC factorization, we derive an exact decomposition of the generator-matching objective into hazard matching and hazard-weighted jump prediction.
    \item \textbf{Empirical evaluation.}
    \ours{} achieves high validity and strong FCD performance on molecular benchmarks using substantially fewer sampling steps. Our ablations further demonstrate that the hazard-guided clock outperforms prior time discretizations and that our sampler surpasses existing samplers with fewer function evaluations.
\end{itemize}

\begin{figure}[t]
    \centering
    \setlength{\abovecaptionskip}{2pt}
    \setlength{\belowcaptionskip}{0pt}
    \includegraphics[width=\textwidth]{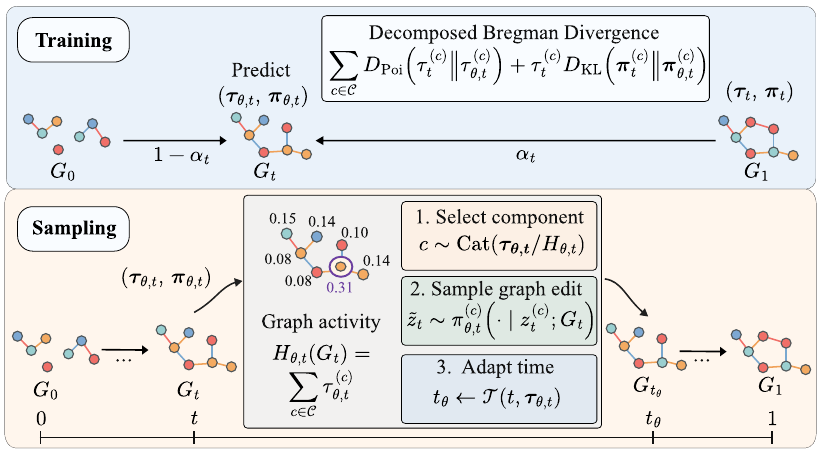}
    \caption{{Overview of \ours{}.}}
    \vspace{-1.5em}
    \label{fig:hglow-overview}
\end{figure}
\vspace{-0.4em}
\section{Background} 
\vspace{-0.5em}
\subsection{Continuous-Time Markov Chains (CTMC)}
A CTMC is a right-continuous stochastic process $\{z_t\}_{t \in [0,1]}$ taking values in a finite discrete state space $\mathcal{Z} = \{1, \dots, Z\}$ and satisfies the Markov property $p(z_t\mid z_{0:s})=p(z_t\mid z_s)$ for all $0\leq s<t\leq 1$. A CTMC is characterized by its \textit{holding times} and \textit{jumps}. Specifically, a state $z_t$ remains unchanged for a period of \textit{holding time}. When the holding time ends, the state jumps to the next state $z_{t'}$ according to a jump probability distribution. The holding-and-jumping process is then repeated. This two-step process is described using a generator matrix $\bm{R}_t \in \mathbb{R}^{Z \times Z}$, analogous to the velocity field in continuous state spaces \citep{lipman2022flow}, which defines transition rates between states:
\begin{equation}\label{eq:ctmc}
p_{t +\mathrm{d}t \mid t}(\tilde{z} \mid z_t)   =   \delta(z_t, \tilde{z}) + R_t(z_t, \tilde{z})\mathrm{d}t,
\end{equation}
where $\delta$ is the Kronecker delta, $R_t(z_t, \tilde{z})$ denotes an entry in the rate matrix, and $R_t(z_t,\tilde{z})\mathrm{d}t$ is the probability of state transition from $z_t$ to $\tilde{z}$ in  the
next infinitesimal time step $\mathrm{d}t$.
The generator matrix satisfies ${R}_t(z_t, \tilde{z}) \geq 0,$ for $z_t \neq \tilde{z}$, and ${R}_t(z_t, z_t) = - \sum_{\tilde{z} \neq z_t} {R}_t(z_t, \tilde{z})$ to ensure that the transition probabilities sum to 1. Any generator rate matrix can be factorized as:
\begin{equation}\label{eq:rate_decomposition}
    R_t(z_t,\tilde{z})
    =
    \hazard_t(z_t)\,\jump_t(\tilde{z}\mid z_t),
    \qquad \tilde{z}\neq z_t.
\end{equation}
Here, $\hazard_t(z_t)
    :=
    - {R}_t(z_t, z_t)
$ is the hazard of the state \(z_t\), i.e., the instantaneous rate of transitioning from $z_t$ to any other state, and $\jump_t(\tilde{z}\mid z_t)$ is the jump distribution, satisfying:
\begin{equation}
    \jump_t(\tilde{z}\mid z_t) \geq 0,
    \qquad
    \jump_t(z_t\mid z_t) = 0,
    \quad \text{and} \quad
    \sum_{\tilde{z}\neq z_t}
    \jump_t(\tilde{z}\mid z_t) = 1.
\end{equation}
A large hazard indicates a high instantaneous jump rate from \(z_t\), whereas \(\jump_t(\cdot\mid z_t)\) defines where it moves. In practice, the trajectory can be simulated with finite time intervals $\Delta t$ using Euler steps: 
\begin{equation}\label{eq:euler}
    z_{t+\Delta t}
    \sim
    \Cat\!\Big(
    \delta(z_t, \cdot)
    +
    R_{t}(z_t,\cdot) \Delta t
    \Big).
\end{equation} 

\subsection{Discrete flow matching}
A discrete flow model constructs a probability path \(p_t\), i.e., the generative flow, connecting a simple prior \(p_0 = p_\text{noise}\) to the target data distribution \(p_1 = p_\text{data}\). During inference, samples are generated by simulating a CTMC whose marginal distribution follows the probability path. To achieve this, we need to construct a rate matrix $\bm{R}_t$ that generates $p_t$, which is used to simulate the trajectory from $t=0$ to $t=1$ following \cref{eq:ctmc}. The marginal distribution and the generator matrix are
related by a conservation law, the \textit{Kolmogorov forward equation}:
\begin{equation}\label{eq:kolmogorov}
    \partial_t p_t(z_t)
    =
    \sum_{k\neq z_t} R_t(k,z_t)\,p_t(k)
    - \sum_{k\neq z_t} R_t(z_t,k)\,p_t(z_t),
\end{equation}
which expresses the time derivative of the marginal probability at $z_t$ as the net balance between incoming and
outgoing probability mass. However, since it is intractable to directly derive a generator for $p_t$, flow matching relies on a simpler datapoint conditional flow $p_{t\mid 1}(\cdot \mid z_1)$ satisfying:
\begin{equation}
p_t(z_t)
=
\mathbb{E}_{z_1 \sim p_{\mathrm{data}}}
\left[
p_{t\mid 1}(z_t \mid z_1)
\right],
\end{equation}
where the conditional flow starts from noise at $t=0$, and converges to the datapoint $z_1$ at $t=1$. Discrete flow matching (DFM) \citep{gat2024discrete} linearly interpolates towards $z_1$ as follows:
\begin{equation}\label{eq:noising_trajectory}
    p_{t\mid1}(z_t \mid z_1) = \alpha_t \delta(z_t, z_1) + (1 - \alpha_t) p_0(z_t), 
\end{equation}
where $\alpha_t$ is a scheduler increasing monotonically from $\alpha_0 = 0$ to $\alpha_1 = 1$. 

\textbf{Generator construction.} One choice of a valid conditional generator that generates the conditional probability path $p_{t\mid 1}(\cdot\mid z_1)$ is
\begin{equation}\label{eq:cond_rate}
R_t\!\left(z_t,\tilde z \mid z_1\right)
=
\frac{\dot{\alpha}_t}{1-\alpha_t}
\left[
\delta
(\tilde z,z_1)
-
\delta(z_t,\tilde z)
\right].
\end{equation}
\citet{gat2024discrete} show that a rate matrix generating the marginal path $p_t$ is obtained by averaging the conditional rates over the posterior $p_{1\mid t}(\cdot\mid z_t)$:
$
R_t(z_t,\tilde{z})
=
\mathbb{E}_{z_1 \sim p_{1\mid t}(\cdot\mid z_t)}
\left[
R_t(z_t,\tilde{z}\mid z_1)
\right].
$
\citet{gat2024discrete} derive the marginal rate matrix in closed form as follows:
\begin{equation}\label{eq:contruct_rate}
R_t(z_t,\tilde{z})
=
\frac{\dot{\alpha}_t}{1-\alpha_t}
\left[
p_{1\mid t}(\tilde{z} \mid z_t)-\delta(z_t, \tilde{z})
\right].
\end{equation}

\textbf{Learning.}
DFM trains a neural network $p^{\theta}_{1 \mid t} (\cdot \mid z_t)$ to approximate the true denoising distribution using the cross-entropy loss, i.e., learning to predict the clean data point $z_1$ given a noisy latent $z_t$:
\begin{equation}
\mathcal{L}_{\text{ce}}
= 
\mathbb{E}_{
t\sim\mathcal{U}(0,1),\,
z_1 \sim p_1(\cdot),\,
z_t \sim p_{t\mid 1}( \cdot\mid z_1)
}
\left[
-
\log
p_{1\mid t}^{\theta}(z_1 \mid z_t)
\right]
\end{equation}
Once trained, the generator is constructed as 
$
R_{\theta,t}(z_t,\tilde z)
=
\frac{\dot{\alpha}_t}{1-\alpha_t}
\left[
p^\theta_{1\mid t}(\tilde z\mid z_t)
-
\delta(z_t,\tilde z)
\right].
$
Together with the initial distribution $p_0$, this generator defines
a CTMC, which can be simulated using \cref{eq:euler}.
\section{Limitations of existing graph DFM models.}
Discrete flow matching has been recently extended for graphs \citep{qin2025defog}, where samples are generated by simulating a multivariate CTMC factorizing over node and edge attributes:
\begin{equation}
p_{t+\mathrm{d}t\mid t}
\bigl(\tilde{\bm{ z}}^{1:D}\mid \bm{z}_t^{1:D}\bigr)
=
\prod_{d=1}^{D}
\Bigl[
\delta\!\bigl(\tilde z^{(d)},z_t^{(d)}\bigr)
+
R_{\theta,t}^{(d)}\!
\bigl(z_t^{(d)},\tilde z^{(d)};G_t\bigr)\mathrm{d}t
\Bigr],
\label{eq:multivariate_ctmc}
\end{equation}
$t \in [0,1]$, $z^{(d)} \in \mathcal{Z}$ denotes a node or edge attribute, $\mathcal{Z}$ is the corresponding attribute space, $\bm{z}^{1:D}_t$ and $\tilde{\bm{z}}^{1:D} \in \mathcal{Z}^{D}$, and $D$ is the number of graph components.
In the following, we discuss the limitations of existing graph DFM models.

\textbf{Sampling complexity.}
Graph DFM models often rely on sophisticated sampling procedures, including predictor-corrector updates \citep{campbell2022continuous}, target guidance and additional stochasticity \citep{campbell2024generative}, as well as carefully designed sampling-time distortions \citep{qin2025defog}. In particular, \citet{luosimgfm,qin2025defog} design non-uniform time discretizations that concentrate computation in specific regions of the generation trajectory where graph validity is most sensitive. While effective, these techniques introduce additional hyperparameters and design choices, requiring substantial inference-time tuning for each task and sampling budget.

\textbf{Sampling efficiency.}
DFM unlocks flexible sampling strategies by decoupling model training from sampling. However, existing graph DFM methods still require a large number of function evaluations (NFE), often comparable to that of diffusion-based approaches.
In practice, DFM methods simulate the process in \cref{eq:multivariate_ctmc} using finite-step approximations such as $\tau$-leaping \citep{gillespie2001approximate}, which hold transition rates fixed over a finite interval and apply the resulting component updates simultaneously. These fixed rates neglect interactions between updates within the interval, thereby introducing approximation errors when multiple graph components change. These errors are more pronounced in the low-step regime, where larger intervals tend to produce more simultaneous updates. Accurately coordinating node and edge transitions is therefore crucial for maintaining sample quality at low NFE. 

\section{Hazard-Guided Sampling using \ours{}}\label{sec:sampling}

We introduce \emph{\ours{}}, a novel hazard-guided sampler for graph generation that builds on the decomposition of CTMC generators into hazards and conditional jump distributions. We first instantiate the decomposition for graphs in \cref{sec:graph_decomposed}, then present our \ours{} sampler in \cref{sec:graph_edits}.

\textbf{Notations.} We represent an undirected graph with $N$ nodes by its node and
edge attributes as $
    G
    =
    \left(
        \bm{x}^{1:n:N},
        \bm{e}^{1:i<j:N}
    \right),
$ where $\bm{x}^{1:n:N} = \{x^{(n)}\}_{n=1}^{N}$ and $\bm{e}^{1:i<j:N} = \{e^{(ij)}\}_{1\leq i < j \leq N}$. Here, $x^{(i)} \in \mathcal{X} = \{1,\ldots,X\}$ denotes
a categorical attribute of node $i$, and
$e^{(ij)} \in \mathcal{E} = \{1,\ldots,E\}$ denotes
a categorical edge attribute. Following \citet{vignac2022digress}, we associate an edge between every pair of nodes, and reserve one category in $\gE$ to represent the absence of an edge. $z^{(c)}$ denotes the attribute of a component $c$, corresponding to a node attribute $x^{(n)}$ or an edge attribute $e^{(ij)}$.


\subsection{Decomposition of graph generators}\label{sec:graph_decomposed}
In the following, we assume access to a model $f_\theta$ from which we can construct valid componentwise generators with transition rates  $R_{\theta,t}^{(c)}(z_t^{(c)},\tilde z;G_t)$. 
For each component $c$, we factorize the off-diagonal elements following \cref{eq:rate_decomposition} as: 
\begin{equation}\label{eq:component_generator}
R_{\theta,t}^{(c)}\!\left(z_t^{(c)},\tilde z;G_t\right)
=
\tau_{\theta,t}^{(c)}(z_t^{(c)}; G_t)
\,\pi_{\theta,t}^{(c)}\!\left(\tilde z\mid z_t^{(c)};G_t\right), \quad \tilde{z} \neq z_t^{(c)},
\end{equation}
where $\tau_{\theta,t}^{(c)}$ is the exit hazard and $\pi_{\theta,t}^{(c)}$ is the jump distribution. To ensure that each row of the generator sums to zero, we set the diagonal entries to:
\begin{equation}\label{eq:component_generator_diagonal}
    R_{\theta,t}^{(c)}\!\left(z_t^{(c)}, z_t^{(c)};G_t\right)
    =
    - \sum_{\tilde z\neq z_t^{(c)}}
    R_{\theta,t}^{(c)}
    \left(z_t^{(c)},\tilde z; G_t\right)
    = -
    \hazard_{\theta,t}^{(c)}\left(z_t^{(c)}; G_t\right)
\end{equation}
The resulting overall graph generator ${R}_{\theta,t}\left(G_t,\widetilde{G}\right)$ factorizes over nodes and edges, as shown in  \cref{prop:componentwise_graph_generator}.



\begin{proposition}[Graph generator]
\label{prop:componentwise_graph_generator}
The graph generator factorizes, i.e., it can be constructed as the sum of the generators of the individual component paths \citep{campbell2022continuous, xu2024discrete}:
\begin{equation}
\resizebox{0.98\linewidth}{!}{$\displaystyle
R_{\theta,t}(G_t,\widetilde G)
=
\sum_n
\delta\!\left(G_t^{\setminus(n)},\widetilde G^{\setminus(n)}\right)
R_{\theta,t}^{X,(n)}\!\left(x_t^{(n)},\tilde x;G_t\right)
+
\sum_{i<j}
\delta\!\left(G_t^{\setminus(ij)},\widetilde G^{\setminus(ij)}\right)
R_{\theta,t}^{E,(ij)}\!\left(e_t^{(ij)},\tilde e;G_t\right)
$}, \notag
\end{equation}
where $\tilde x^{(n)}$ and $\tilde e^{(ij)}$ denote the attributes of node $(n)$ and edge $(i,j)$ in $\widetilde G$, respectively, and $\delta\!\left(G_t^{\setminus(n)},
        \widetilde{G}^{\setminus(n)}
    \right)$ and $\delta\!\left(
        G_t^{\setminus(ij)},
        \widetilde{G}^{\setminus(ij)}
    \right)$ 
check if $G_t$ and $\widetilde{G}$ are identical except for node $(n)$ or edge $(ij)$, respectively. $\bm{R}^{X,(n)}_{\theta,t} \in \mathbb{R}^{{X} \times {X}}$ and $\bm{R}^{E,(ij)}_{\theta,t} \in \mathbb{R}^{{E} \times {E}}$ are the node and edge generators.
\end{proposition}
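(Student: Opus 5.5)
The plan is to start from the factorized multivariate transition kernel in \cref{eq:multivariate_ctmc}, expand the product to first order in $\mathrm{d}t$, and read off the joint generator by matching with the univariate definition \cref{eq:ctmc}. Concretely, I will write the product over all $D=N+\binom{N}{2}$ components (nodes and edges) as
\begin{equation*}
\prod_{c}\Bigl[\delta\bigl(\tilde z^{(c)},z_t^{(c)}\bigr)+R_{\theta,t}^{(c)}\bigl(z_t^{(c)},\tilde z^{(c)};G_t\bigr)\mathrm{d}t\Bigr]
=\prod_c\delta\bigl(\tilde z^{(c)},z_t^{(c)}\bigr)
+\mathrm{d}t\sum_{c}R_{\theta,t}^{(c)}\bigl(z_t^{(c)},\tilde z^{(c)};G_t\bigr)\prod_{c'\neq c}\delta\bigl(\tilde z^{(c')},z_t^{(c')}\bigr)+O(\mathrm{d}t^2).
\end{equation*}
The first term is $\delta(G_t,\widetilde G)$, and each product $\prod_{c'\neq c}\delta(\cdot)$ is exactly $\delta(G_t^{\setminus(c)},\widetilde G^{\setminus(c)})$. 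Comparing with $p_{t+\mathrm{d}t\mid t}(\widetilde G\mid G_t)=\delta(G_t,\widetilde G)+R_{\theta,t}(G_t,\widetilde G)\mathrm{d}t$ and letting $\mathrm{d}t\to0$ identifies $R_{\theta,t}(G_t,\widetilde G)$ as the claimed sum. Splitting the index $c$ into nodes $n$ and edges $i<j$ gives the two sums in the proposition.

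Next I will check that the resulting matrix is a valid generator on the product space. The off-diagonal entries are nonnegative: if $\widetilde G$ differs from $G_t$ in exactly one component $c$, only the $c$-th term survives and equals $\tau^{(c)}_{\theta,t}\pi^{(c)}_{\theta,t}\ge0$ by \cref{eq:component_generator}. If $\widetilde G$ differs in two or more components, every $\delta$ factor vanishes and the rate is zero. For the diagonal, $\widetilde G=G_t$, so all $\delta$'s equal one and $R_{\theta,t}(G_t,G_t)=-\sum_c\tau^{(c)}_{\theta,t}$ by \cref{eq:component_generator_diagonal}. For rows summing to zero, I sum over $\widetilde G$: for fixed $c$ the indicator restricts the sum to $\tilde z^{(c)}\in\mathcal Z$, and $\sum_{\tilde z}R^{(c)}_{\theta,t}(z_t^{(c)},\tilde z)=0$ componentwise.

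The main subtlety is justifying the first-order expansion, namely that the cross terms are genuinely $O(\mathrm{d}t^2)$ and vanish in the limit, uniformly over $\widetilde G$. I will handle it by noting that the state space is finite and the rates are bounded for each fixed $t$. Any term involving two or more rate factors therefore carries $\mathrm{d}t^k$ with $k\ge2$, with bounded coefficients. This is the standard argument of \citet{campbell2022continuous}. The dependence of each $R^{(c)}$ on the full current graph $G_t$ causes no difficulty, since $G_t$ is fixed within the infinitesimal step.
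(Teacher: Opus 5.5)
Your proposal is correct and follows the standard route. The paper gives no separate proof of this proposition; it presents the sum as a construction and defers to \citet{campbell2022continuous} and \citet{xu2024discrete}, where the underlying argument is exactly your first-order expansion of the factorized kernel in \cref{eq:multivariate_ctmc}. Your validity check (off-diagonal mass only on single-component changes, diagonal $-\sum_c \tau^{(c)}_{\theta,t}$, zero row sums) is precisely the property the paper later relies on in the proofs of \cref{prop:total_graph_hazard} and \cref{thm:decomposed-gm}.
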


Under this construction, the graph CTMC changes one node or edge attribute at a time, with transition rates that depend on the entire graph $G_t$.



\begin{proposition}[Graph hazard]
\label{prop:total_graph_hazard}
For the generator in
\cref{prop:componentwise_graph_generator},
the total graph hazard is given by the sum of the componentwise node and edge hazards $\hazard_{\theta,t}^{X,(n)}$ and $\hazard_{\theta,t}^{E,(ij)}$ of $\bm{R}_{\theta,t}^{X,(n)}$ and $\bm{R}_{\theta,t}^{E,(ij)}$:
\begin{equation}
H_{\theta,t}\left(G_t\right)
=
\sum_{n}\hazard_{\theta,t}^{X,(n)}\left(x^{(n)}_t; G_t\right)
+
\sum_{i<j}\hazard_{\theta,t}^{E,(ij)}\left(e^{(ij)}_t; G_t\right)
\label{eq:predicted-graph-hazard}
\end{equation}
\end{proposition}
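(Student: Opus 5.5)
The plan is to compute the hazard of the joint graph generator directly from its definition. By \cref{eq:rate_decomposition}, the hazard of a state is minus its diagonal entry, which equals the total off-diagonal row mass:
\[
H_{\theta,t}(G_t) := -R_{\theta,t}(G_t,G_t) = \sum_{\widetilde G\neq G_t} R_{\theta,t}(G_t,\widetilde G).
\]
The second equality uses the row-sum-zero property. That property holds because the joint generator in \cref{prop:componentwise_graph_generator} is a sum of componentwise generators, and each of these has zero row sums by \cref{eq:component_generator_diagonal}. So it suffices to evaluate the off-diagonal row sum of the expression in \cref{prop:componentwise_graph_generator}.

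\textbf{Step 1: reorganise the sum over $\widetilde G$.} I will substitute the formula from \cref{prop:componentwise_graph_generator} and swap the order of summation, so that the sum over $\widetilde G\neq G_t$ sits inside the sums over nodes $n$ and pairs $i<j$.

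For a fixed node $n$, the indicator $\delta(G_t^{\setminus(n)},\widetilde G^{\setminus(n)})$ kills every $\widetilde G$ that differs from $G_t$ outside node $n$. The surviving graphs $\widetilde G\neq G_t$ are therefore in bijection with $\tilde x\in\mathcal X\setminus\{x_t^{(n)}\}$. The analogous bijection for edge $(ij)$ is with $\tilde e\in\mathcal E\setminus\{e_t^{(ij)}\}$.

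\textbf{Step 2: collapse each term to a componentwise hazard.} After the reindexing, the node-$n$ term becomes
\[
\sum_{\tilde x\neq x_t^{(n)}} R^{X,(n)}_{\theta,t}(x_t^{(n)},\tilde x;G_t) = \tau^{X,(n)}_{\theta,t}(x_t^{(n)};G_t),
\]
by \cref{eq:component_generator} together with $\sum_{\tilde x\neq x_t^{(n)}}\pi^{(n)}_{\theta,t}(\tilde x\mid x_t^{(n)};G_t)=1$, or equivalently by \cref{eq:component_generator_diagonal}. The edge terms collapse in the same way to $\tau^{E,(ij)}_{\theta,t}(e_t^{(ij)};G_t)$. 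Summing over all nodes and pairs gives \cref{eq:predicted-graph-hazard}.

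\textbf{Consistency check on the diagonal.} As a cross-check, I will also compute $-R_{\theta,t}(G_t,G_t)$ directly. At $\widetilde G=G_t$ every indicator equals one, so the diagonal entry is $\sum_n R^{X,(n)}(x_t^{(n)},x_t^{(n)}) + \sum_{i<j} R^{E,(ij)}(e_t^{(ij)},e_t^{(ij)})$. Applying \cref{eq:component_generator_diagonal} to each term gives the same total.

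\textbf{Main obstacle.} The only delicate point is the bookkeeping in Step 1. Each off-diagonal $\widetilde G$ that differs from $G_t$ in exactly one component must be counted exactly once. This holds because the supports of the indicators for distinct components, restricted to $\widetilde G\neq G_t$, are pairwise disjoint: a graph differing only at node $n$ cannot also differ only at node $m\neq n$, or only at some edge. Graphs that differ in two or more components receive zero rate, so they contribute nothing. With the disjointness noted, the remaining steps are routine.
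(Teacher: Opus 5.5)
Your proposal is correct and follows essentially the same route as the paper: both write $H_{\theta,t}(G_t)$ as the off-diagonal row sum of the graph generator, partition the nonzero terms according to the single node or edge that changes, and identify each inner sum with the corresponding componentwise hazard. Your explicit disjointness argument and the diagonal cross-check add detail to the partition step, which the paper states in one line.
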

The graph hazard in \cref{eq:predicted-graph-hazard} provides a non-negative scalar measure of graph activity, with larger values indicating a higher instantaneous rate of graph edits. 

\vspace{-0,5em}
\setlength{\intextsep}{0pt}
\setlength{\columnsep}{6pt}
\begin{wrapfigure}{r}{0.57\textwidth}
\vspace{0\baselineskip} 
\begin{minipage}{\linewidth}
\begin{algorithm}[H]
\caption{\ours{}: Hazard-guided sampling}
\label{alg:hazard-single-component}
\footnotesize
\begin{algorithmic}[1]
\State \textbf{Input:} model $f_\theta$, budget $T$,
threshold $T_{\mathrm{stop}}$
\State Initialize $G\sim p_0$, $t\gets0$, $k\gets0$
\State Let $\mathcal C=\mathcal C_X\cup\mathcal C_E$
\While{$k<T$}
    \State Construct $\bm{R}_{\theta,t}^{(c)}$ and derive
    $\{(\hazard_{\theta,t}^{(c)},\boldsymbol{\jump}_{\theta,t}^{(c)})\}_{c\in\mathcal C}$ \Comment{\cref{eq:component_generator_diagonal}}

    \State $H_{\theta,t}\gets\displaystyle\sum_{c\in\mathcal C}\hazard_{\theta,t}^{(c)}$
    \Comment{\cref{eq:predicted-graph-hazard}}

    \State \textbf{if} $H_{\theta,t}<T_{\mathrm{stop}}$
        \textbf{then} \Return $G$
    \Comment{Early stopping}


    \State
    $
    c\sim\operatorname{Cat}
    \left(
        \left\{{\hazard_{\theta,t}^{(c)}}/{H_{\theta,t}}\right\}_{c\in\mathcal C}
    \right)
    $
    \Comment{Component selection}

    \State 
    $\widetilde z\sim {\jump}_{\theta,t}^{(c)}(\cdot \mid z; G)$
    \Comment{State jump \cref{eq:hazard-conditional-jump}}

    \State
    $G\gets\operatorname{Update}(G,c,\widetilde z)$
    \Comment{Update the graph}

    \State 
    $
    t\gets
        1-\frac{1-\alpha_t}{2\dot{\alpha}_t}\left(
            \overline{\hazard}_{\theta,t}^{X}
            +
            \overline{\hazard}_{\theta,t}^{E}
            \right)
    $
    \Comment{Hazard-guided clock}


    \State $k\gets k+1$
\EndWhile

\State \Return $G$
\end{algorithmic}
\end{algorithm}
\end{minipage}
\vspace{-2\baselineskip} 
\end{wrapfigure}

\subsection{Graph generation planning under \ours{}}\label{sec:graph_edits}
Building on the hazard-jump decomposition, we design the \ours{} sampler to fully leverage the model’s own predictions for graph-edit planning. The predicted hazards and conditional jump distributions jointly guide three key decisions: \textit{(i)} which nodes or edges to update, \textit{(ii)} which states the selected components should transition to, and \textit{(iii)} at which time step to evaluate the model next.

\textbf{Component selection.}
\textit{Hazard-based selection.}
A natural choice is to sample components based on their contribution to the graph hazard, where, for $H_{\theta,t}>0$,:
\begin{equation}
c_t\sim\operatorname{Cat}\!\left(
\left\{
{\hazard_{\theta,t}^{(c)}}/{H_{\theta,t}}
\right\}_{c\in\mathcal C}
\right), 
\label{eq:hazard-component-selection}
\end{equation} 

\textit{Confidence-weighted selection.}
Alternatively, we can select components with confident jumps:
\begin{equation}
w_{\theta,t}^{(c)}
=
\hazard_{\theta,t}^{(c)}
\max_{\widetilde z\neq z_t^{(c)}}
\pi_{\theta,t}^{(c)}
\!\left(\widetilde z\mid z_t^{(c)};G_t\right),
\label{eq:confidence-weighted-hazard}
\end{equation}
and sample in proportion to $w_{\theta,t}^{(c)}$. We provide an ablation of component selection criteria in \cref{tab:sampling_ablation}.

\textbf{Conditional jump.} Once $c_t$ is selected for an update, we sample its jump following:
\begin{equation}
\widetilde z \sim
\jump_{\theta,t}^{(c_t)}
\!\left(\cdot\mid z_t^{(c_t)};G_t\right).
\label{eq:hazard-conditional-jump}
\end{equation}
As this jump excludes the current state, each conditional jump changes exactly one component. 


\textbf{Hazard-based time allocation.} Sampling time distortion is particularly important when generating graphs under a limited number of steps $T$. 
\ours{} benefits from the predicted hazards to \textit{dynamically} adapt the time according to the graph activity. Higher graph activity motivates a finer discretization with more refinement steps to revise earlier predictions, whereas low activity motivates a coarser discretization with fewer refinement steps. 

To derive a sampling-time policy, we first look into the hazards implied by the DFM generator. Under the DFM  construction in \cref{eq:contruct_rate}, the conditional hazard simplifies to:  
$
\hazard_t^{(c)}(G_t \mid G_1)
=
\frac{\dot{\alpha}_t}{1-\alpha_t}\,\mathbf{1}\!\left\{
z_t^{(c)} \neq z_1^{(c)}
\right\}.
$
The marginal hazard averages conditional hazards over clean graphs:
\begin{equation}
\hazard_t^{(c)}(G_t) =
\mathbb{E}_{G_1\mid G_t}
\left[\hazard_t^{(c)}(G_t\mid G_1)\right] =
\frac{\dot{\alpha}_t}{1-\alpha_t}\,\Pr\!\left(
z_1^{(c)}\neq z_t^{(c)}\mid G_t
\right).
\label{eq:hazard-mismatch-interpretation}
\end{equation}
Therefore, hazards estimate the remaining componentwise disagreement between the noisy latent $G_t$ and the clean graph. We leverage this to construct the following hazard-guided sampling time:
\begin{equation}
\widehat t
=
1-\frac{1-\alpha_t}{2\dot\alpha_t}
\left(
\frac{1}{N}\sum_{i=1}^{N}\hazard_{\theta,t}^{X,(i)}
+
\frac{1}{\binom{N}{2}}
\sum_{1\leq i<j\leq N}\hazard_{\theta,t}^{E,(ij)}
\right)
\label{eq:hazard-target-time}
\end{equation}
This measures the average remaining graph activity. We note that we normalize node and edge hazards to prevent
edge activity from dominating the clock update and to avoid larger time jumps in larger graphs. We provide the full derivation in \cref{app:hazard-time}. Importantly, the time allocation in \cref{eq:hazard-target-time} defines a graph-dependent sequence of evaluations that can move forward or backward in time. 
When time moves from $t$ to $s < t$, we renoise the graph by independently resampling nodes and edges from their corresponding priors with probability $1-\frac{\alpha_s}{\alpha_t}$, allowing the sampler to revisit earlier predictions.

\textbf{Early stopping.} Once the graph hazard falls below a threshold $T_{\mathrm{stop}}$, we terminate the trajectory. This prevents further graph edits when the model predicts low graph activity, reducing the risk of disrupting an
otherwise valid graph. This is particularly crucial for graph generation, where even a single incorrect node or edge update can violate structural or chemical constraints.
Our ablation in \cref{tab:sampling_ablation} shows that removing early stopping degrades sample quality. In \cref{sec:experiments}, we show that \ours{} terminates before exhausting the entire budget $T$ while maintaining strong performance.

\vspace{-0,5em}
\subsection{Accelerating pretrained models}
\ours{} can be applied directly to pretrained models with accessible CTMC generators. Existing graph DFM models train a denoising network $p_{1\mid t}^{\theta}(\cdot\mid G_t)$ to predict distributions over clean node and edge attributes, and construct generators following \cref{eq:contruct_rate}. 
These models typically sample via Euler discretization, updating node and edge states independently in parallel at each time step. 
In contrast, \ours{} decomposes the resulting generators post hoc into hazards and conditional jump distributions to guide graph edits. In \cref{fig:defog_hglow_comparison}, we show that \ours{} can significantly accelerate graph generation without additional training or sampling-time interventions. We summarize our sampling strategy in \cref{alg:hazard-single-component}, omitting the generator construction and renoising step for clarity.
\section{\ours{}: Discrete Flow matching Model for Graphs}\label{sec:training}
\begin{wrapfigure}{r}{0.42\textwidth}
\vspace{0\baselineskip} 
\begin{minipage}{\linewidth}
\begin{algorithm}[H]
\caption{Training the \ours{} model}
\label{alg:cat-diffusion-training}
\footnotesize
\begin{algorithmic}[1]
    \State \textbf{Input:} Graph dataset
    $\mathcal{D}$

    \While{$f_\theta$ has not converged}
        \State Sample $G_1\sim\mathcal{D}$ 
        \State Sample $t\sim\mathcal{T}$
        \State Sample $G_0\sim p_0$
        \State Sample
        $G_t\sim p_{t\mid 1}(\cdot\mid G_1)$
        \State Compute $(\boldsymbol{\hazard}_{t}^{X},
    \boldsymbol{\jump}_{t}^{X},
    \boldsymbol{\hazard}_{t}^{E},
    \boldsymbol{\jump}_{t}^{E})$ 

        \State
        $(\boldsymbol{\hazard}_{\theta,t}^{X},
    \boldsymbol{\jump}_{\theta,t}^{X},
    \boldsymbol{\hazard}_{\theta,t}^{E},
    \boldsymbol{\jump}_{\theta,t}^{E}) \gets f_\theta(G_t,t)$ 

        \State loss $\gets \mathcal{L}(\theta)$
        \State optimizer.step(loss)
    \EndWhile
\end{algorithmic}
\end{algorithm}
\end{minipage}
\vspace{-2\baselineskip} 
\end{wrapfigure}
Motivated by the acceleration gains achieved by \ours{} using pretrained models, we propose a structured generator-learning framework for graphs. In the following, we propose the \textit{\ours{}} model, a simple discrete flow-matching model for graph generation that directly parameterizes the generator using hazards and conditional jump distributions.

\textbf{Noising.}
We design the noising trajectory by independently adding noise to each node and edge:
\begin{align}\label{eq:noise_graph}
    &p_{t\mid1}\left(G_t \mid G_1\right) = \notag \\ &\quad \prod_{n} p_{t\mid1} \left(x^{(n)}_t \mid x^{(n)}_1\right)\prod_{i<j} p_{t\mid1}\left(e^{(ij)}_t \mid e^{(ij)}_1\right),
\end{align}
where $p_{t\mid1} \left(x^{(n)}_t \mid x^{(n)}_1\right)$ and $p_{t\mid1}\left(e^{(ij)}_t \mid e^{(ij)}_1\right)$ are defined following \cref{eq:noising_trajectory}. 
We use empirical marginal priors over node and edge types $p^{\mathcal{X}}_0$ and $p^{\mathcal{E}}_0$ \citep{vignac2022digress}. 

\textbf{Conditional targets.} 
For each component $c$, we factorize the conditional generator $R_t(z_t^{(c)},\tilde z\mid G_1)$ into an exit hazard $\hazard_t^{(c)}(z_t^{(c)}\mid G_1)$ and conditional jump distribution $\jump_t^{(c)}(\cdot\mid z_t^{(c)},G_1)$.

\textbf{Optimization.}
We parameterize node and edge hazards and conditional jump distributions using a neural network:
$
f_\theta(G_t, t)
=
\left(
    \boldsymbol{\hazard}_{\theta,t}^{X},
    \boldsymbol{\jump}_{\theta,t}^{X},
    \boldsymbol{\hazard}_{\theta,t}^{E},
    \boldsymbol{\jump}_{\theta,t}^{E}
\right).
$
We follow the conditional generator matching framework in \citet{holderrieth2025generator}, and use the Bregman Divergence loss to optimize the continuous-time ELBO derived for finite-state jump processes:
\begin{align}
\mathcal{L}(\theta) = \mathbb{E}_{\substack{G_1\sim p_{1},\;\\t\sim\mathcal{T},\\ G_t\sim p_{t\mid 1}\left(\cdot\mid G_1\right)}} \left[
\sum_{\widetilde{G}\neq G_t} 
\left( 
R_{\theta,t}\left(G_t, \widetilde{G}\right) - R_t\left(G_t, \widetilde{G}\mid G_1\right)
\log R_{\theta,t}\left(G_t, \widetilde{G}\right) 
\right)
\right] + C
\end{align}
We show in \cref{thm:decomposed-gm} that, under \ours{}'s factorization, this objective separates exactly into a hazard matching and a hazard-weighted jump optimization over nodes and edges.

\begin{theorem}[Decomposed graph generator matching objective]
\label{thm:decomposed-gm}
Assume that the conditional target and learned graph
generators follow the single-component construction in
\cref{prop:componentwise_graph_generator} and admit hazard-jump decompositions. Then, the conditional generator matching objective decomposes as:
\begin{equation}
\mathcal{L}(\theta)
=
\mathbb{E}_{\substack{
G_1\sim p_1,\;t\sim\mathcal{T},\\
G_t\sim p_{t\mid 1}(\cdot\mid G_1)
}}
\left[
\sum_{n=1}^{N}\mathcal{L}_{\theta,t}^{X,(n)}
+
\sum_{1\le i<j\le N}\mathcal{L}_{\theta,t}^{E,(ij)}
\right],
\label{eq:componentwise-decomposed-loss-full}
\end{equation}
where, for each graph component $c$, 
\begin{equation}
\mathcal{L}_{\theta,t}^{(c)}
=
D_{\mathrm{Poi}}\!\left(
\hazard_t^{(c)}
\,\middle\|\,
\hazard_{\theta,t}^{(c)}
\right)
+
\hazard_t^{(c)}
D_{\mathrm{KL}}\!\left(
\boldsymbol{\jump}_t^{(c)}
\,\middle\|\,
\boldsymbol{\jump}_{\theta,t}^{(c)}
\right).
\label{eq:componentwise-decomposed-loss}
\end{equation}
Here, $D_{\mathrm{Poi}}(a\|b)=a\log\frac{a}{b}-a+b$.
\end{theorem}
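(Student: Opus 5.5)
The plan is to substitute the single-component structure of \cref{prop:componentwise_graph_generator} into the inner sum of $\mathcal{L}(\theta)$ and then work one component at a time. Both the learned generator $R_{\theta,t}$ and the conditional target $R_t(\cdot,\cdot\mid G_1)$ have this structure, so any $\widetilde G\neq G_t$ with nonzero rate differs from $G_t$ in exactly one node $n$ or one edge $(i,j)$. First I will show that the set $\{\widetilde G\neq G_t\}$ splits disjointly into graphs differing only at node $n$, for each $n$, and graphs differing only at edge $(ij)$, for each $i<j$; graphs differing in two or more components contribute zero. The only subtlety is the term $-R_t\log R_{\theta,t}$ when both rates vanish. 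I will handle it with the convention $0\log 0=0$, which is consistent with the Bregman form. On the piece where only component $c$ changes, the Kronecker deltas equal one for exactly the matching summand, so $R_{\theta,t}(G_t,\widetilde G)=R^{(c)}_{\theta,t}(z_t^{(c)},\tilde z;G_t)$ and similarly for the target. This turns the inner sum into $\sum_c\sum_{\tilde z\neq z_t^{(c)}}\bigl(R^{(c)}_{\theta,t}-R^{(c)}_t\log R^{(c)}_{\theta,t}\bigr)$, which is the outer structure of \cref{eq:componentwise-decomposed-loss-full}.

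Next, for a fixed component, write $\hazard=\hazard_t^{(c)}$, $\hazard_\theta=\hazard_{\theta,t}^{(c)}$, $\jump=\jump_t^{(c)}$ and $\jump_\theta=\jump_{\theta,t}^{(c)}$, and insert the factorization of \cref{eq:component_generator}. Because the jump distributions sum to one over $\tilde z\neq z_t^{(c)}$, the linear term gives $\sum_{\tilde z}\hazard_\theta\jump_\theta(\tilde z)=\hazard_\theta$. The log term splits as
\begin{equation}
\sum_{\tilde z}\hazard\,\jump(\tilde z)\bigl[\log\hazard_\theta+\log\jump_\theta(\tilde z)\bigr]
=\hazard\log\hazard_\theta+\hazard\sum_{\tilde z}\jump(\tilde z)\log\jump_\theta(\tilde z).
\end{equation}
The per-component contribution is therefore $\hazard_\theta-\hazard\log\hazard_\theta-\hazard\sum_{\tilde z}\jump\log\jump_\theta$.

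Finally, I add and subtract $\hazard\log\hazard-\hazard$ and $\hazard\sum_{\tilde z}\jump\log\jump$. This rewrites the contribution as $D_{\mathrm{Poi}}(\hazard\|\hazard_\theta)+\hazard\,D_{\mathrm{KL}}(\jump\|\jump_\theta)$ plus the $\theta$-independent remainder $-\hazard\log\hazard+\hazard-\hazard\sum\jump\log\jump$. That remainder depends only on the conditional target, i.e.\ on $G_1$ and $G_t$, so after taking the expectation it is absorbed into the constant $C$ of $\mathcal{L}(\theta)$, where $C$ is chosen so that the Bregman divergence is exact. Equality then holds as stated, or up to a $\theta$-independent constant if $C$ is read loosely.

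I expect the main obstacle to be the degenerate cases rather than the algebra. When $\hazard_t^{(c)}=0$, as happens under the DFM target whenever $z_t^{(c)}=z_1^{(c)}$, the conditional jump $\jump_t^{(c)}$ is undefined. I would handle this by noting that the KL term is multiplied by $\hazard=0$, and with $0\log0=0$ the Poisson term reduces to $\hazard_\theta$, so any choice of $\jump$ works. Entries with $\jump(\tilde z)=0$ are covered by the same convention inside the KL divergence. A short remark is also needed that $\hazard_\theta>0$ and $\jump_\theta>0$ wherever the target is positive; otherwise both sides equal $+\infty$ together.
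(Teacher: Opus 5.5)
Your proposal is correct and follows the paper's proof. You split the sum over $\widetilde G\neq G_t$ by the unique changed node or edge, substitute $r=\hazard\jump$ and $r_\theta=\hazard_\theta\jump_\theta$, use $\sum_{\tilde z\neq z_t^{(c)}}\jump(\tilde z)=1$, and handle the $\hazard_t^{(c)}=0$ and zero-learned-rate cases with the same conventions the paper uses.

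The only difference is bookkeeping. The paper starts from the generalized-KL form, which is your Bregman form with $C=\mathbb{E}\bigl[\sum_{\widetilde G\neq G_t}(R_t\log R_t-R_t)\bigr]$ already folded in, so your target-only remainder is exactly what that choice of $C$ cancels. The paper also notes that components with a single possible attribute have empty jump support and contribute zero; you may want to state this too.
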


We provide a proof in \cref{app:objective}. 
Theorem~\ref{thm:decomposed-gm} shows that learning the graph generator reduces to two separate prediction tasks for every node and edge. The hazard head learns the instantaneous rate at which the component changes, whereas the jump head learns the destination distribution conditional on a jump occurring. Importantly, the jump loss is weighted by the target hazard. Therefore, components with a high hazard receive greater weight, whereas the jump loss vanishes for components with zero target hazard. We summarize the training algorithm in \cref{alg:cat-diffusion-training}.

\section{Experiments}\label{sec:experiments} 
\vspace{-0,5em}
We evaluate \ours{} on molecular and synthetic graph benchmarks. Unless otherwise specified, \ours{} results refer to the \ours{} DFM model trained with the decomposed hazard–jump objective (\cref{sec:training}) and sampled using the \ours{} sampler (\cref{sec:sampling}).
To isolate the effect of the network architecture, we build the \ours{} model using the graph transformer in \citet{vignac2022digress}.
We describe the benchmarks, baselines, and model architectures in \cref{app:exp_details}.


\setlength{\intextsep}{3pt}
\setlength{\columnsep}{6pt}

\begin{wraptable}{r}{0.46\textwidth}
\centering
\setlength{\tabcolsep}{1pt}
\vspace{-1\baselineskip}
\caption{Performance on QM9. We report \emph{Steps}, denoting the available sampling budget $T$, and the mean $\pm$ standard deviation of NFE across sampled molecules.}
\label{tab:qm9_without_hydrogens}
\vspace{-0.8em}

\resizebox{\linewidth}{!}{%
\begin{tabular}{lc
    S[table-format=2.1]
    S[table-format=2.1]
    S[table-format=3.1]
    S[table-format=1.2]
}
\toprule
Model
& Steps (NFE)
& {Valid $\uparrow$}
& {\shortstack{Relaxed \\Validity $\uparrow$}}
& {Unique $\uparrow$}
& {FCD $\downarrow$} \\
\midrule
Training set
& {--} & 99.3 & 99.5 & 99.2 & 0.03 \\
\midrule

DiGress \tiny~\citep{vignac2022digress}
& 500\,{\scriptsize(500 $\pm$ 0.0)}
& 99.0 & {--} & 96.2 & {--} \\

CatFlow \tiny~\citep{eijkelboom2024variational}
& {--}
& 99.8 & {--} & 100.0 & 0.44 \\

DisCo \tiny~\citep{xu2024discrete}
& 500\,{\scriptsize(500 $\pm$ 0.0)}
& 99.3 & {--} & {--} & {--} \\

Cometh\tiny~\citep{siraudin2024cometh}
& 500\,{\scriptsize(500 $\pm$ 0.0)}
& 99.6 & {--} & 96.8 & 0.25 \\

SID \tiny~\citep{boget2025simple}
& 500\,{\scriptsize(500 $\pm$ 0.0)}
& 99.7 & {--} & {--} & 0.50 \\

CID\tiny~\citep{boget2025simple}
& 500\,{\scriptsize(500 $\pm$ 0.0)}
& 99.9 & {--} & {--} & 1.76 \\
\midrule

SimGFM\tiny~\citep{luosimgfm}
& 50\,{\scriptsize(50 $\pm$ 0.0)}
& 99.7 & 99.8 & 96.3 & 0.13 \\

SimGFM\tiny~\citep{luosimgfm}
& 500\,{\scriptsize(500 $\pm$ 0.0)}
& 99.8 & 99.9 & 94.9 & 0.31 \\
\midrule

DeFoG\tiny~\citep{qin2025defog}
& 50\,{\scriptsize(50 $\pm$ 0.0)}
& 98.9 & 99.2 & 96.2 & 0.26 \\

DeFoG\tiny~\citep{qin2025defog}
& 500\,{\scriptsize(500 $\pm$ 0.0)}
& 99.3 & 99.4 & 96.3 & 0.12 \\
\midrule

\rowcolor{gray!15}
\textbf{\ours{} (Ours)}
& 20\,{\scriptsize(19.0 $\pm$ 0.4)}
& 99.2 & 99.5 & 95.0 & 0.37 \\

\rowcolor{gray!15}
\textbf{\ours{} (Ours)}
& 50\,{\scriptsize(22.4 $\pm$ 2.3)}
& 99.1 & 99.5 & 96.2 & 0.10 \\
\bottomrule
\end{tabular}%
}

\vspace{-1em}
\end{wraptable}
\subsection{Molecular graph generation}\label{sec:molecules_exps}

\vspace{-0,5em}
\textbf{Small molecule generation.}
We evaluate \ours{} on QM9 \citep{wu2018moleculenet} and report the results in \cref{tab:qm9_without_hydrogens}. While \emph{Steps} denotes the available budget of $T$ sampling steps, \ours{} can terminate well before exhausting this budget when graph activity becomes sufficiently low, resulting in substantially fewer function evaluations. We report the mean $\pm$ standard deviation of NFE across sampled molecules in parentheses. With a budget of only 50 sampling steps, \ours{} achieves the lowest FCD among the evaluated models, surpassing both 50- and 500-step baselines while maintaining high validity.


\begin{table}[t]
\centering
\setlength{\tabcolsep}{1.5pt}
\caption{Large molecule generation performance on MOSES and GuacaMol. Only iterative denoising-based methods are reported. We report \emph{Steps}, denoting the sampling budget $T$, and the mean $\pm$ standard deviation of NFE across sampled molecules. }
\vspace{-0.5em}
\label{tab:large_molecule_gen}

\resizebox{\textwidth}{!}{%
\begin{tabular}{@{}lc
    *{4}{S[table-format=3.1]}
    *{2}{S[table-format=1.2]}
    S[table-format=2.1]
    |c*{5}{S[table-format=3.1]}
@{}}
\toprule
& \multicolumn{8}{c|}{MOSES}
& \multicolumn{6}{c}{GuacaMol} \\
\cmidrule(lr){2-9}\cmidrule(lr){10-15}

Model
& Steps (NFE)
& {Val.\,$\uparrow$}
& {Unique.\,$\uparrow$}
& {Novelty\,$\uparrow$}
& {Filters\,$\uparrow$}
& {FCD\,$\downarrow$}
& {SNN\,$\uparrow$}
& {Scaf.\,$\uparrow$}
& Steps (NFE)
& {Val.\,$\uparrow$}
& {V.U.\,$\uparrow$}
& {V.U.N.\,$\uparrow$}
& {KLD.\,$\uparrow$}
& {FCD\,$\uparrow$} \\
\midrule

Training set
& --
& 100.0 & 100.0 & 0.0 & 100.0 & 0.01 & 0.64 & 99.1
& --
& 100.0 & 100.0 & 0.0 & 99.9 & 92.8 \\
\midrule

DiGress{\tiny~\citep{vignac2022digress}}
& 500\,{\scriptsize(500 $\pm$ 0.0)}
& 85.7 & 100.0 & 95.0 & 97.1 & 1.19 & 0.52 & 14.8
& 500\,{\scriptsize(500 $\pm$ 0.0)}
& 85.2 & 85.2 & 85.1 & 92.9 & 68.0 \\

DisCo{\tiny~\citep{xu2024discrete}}
& 500\,{\scriptsize(500 $\pm$ 0.0)}
& 88.3 & 100.0 & 97.7 & 95.6 & 1.44 & 0.50 & 15.1
& 500\,{\scriptsize(500 $\pm$ 0.0)}
& 86.6 & 86.6 & 86.5 & 92.6 & 59.7 \\

Cometh{\tiny~\citep{siraudin2024cometh}}
& 500\,{\scriptsize(500 $\pm$ 0.0)}
& 86.1 & 100.0 & 96.0 & 97.3 & 1.27 & 0.50 & 16.4
& 500\,{\scriptsize(500 $\pm$ 0.0)}
& 94.4 & 94.4 & 93.5 & 94.1 & 67.4 \\

Cometh-PC{\tiny~\citep{siraudin2024cometh}}
& 500\,{\scriptsize(500 $\pm$ 0.0)}
& 90.5 & 99.9 & 92.6 & 99.1 & 1.27 & 0.54 & 16.0
& 500\,{\scriptsize(500 $\pm$ 0.0)}
& 98.9 & 98.9 & 97.6 & 96.7 & 72.7 \\
\midrule

SimGFM{\tiny~\citep{luosimgfm}}
& 50\,{\scriptsize(50 $\pm$ 0.0)}
& 86.1 & 100.0 & 95.1 & 98.6 & 1.44 & 0.51 & 16.9
& 50\,{\scriptsize(50 $\pm$ 0.0)}
& 98.5 & 98.5 & 98.3 & 93.6 & 77.1 \\

SimGFM{\tiny~\citep{luosimgfm}}
& 200\,{\scriptsize(200 $\pm$ 0.0)}
& 89.4 & 100.0 & 94.6 & 98.9 & 1.08 & 0.55 & 17.3
& 200\,{\scriptsize(200 $\pm$ 0.0)}
& 99.2 & 99.2 & 98.5 & 94.6 & 76.4 \\

SimGFM{\tiny~\citep{luosimgfm}}
& 500\,{\scriptsize(500 $\pm$ 0.0)}
& 91.1 & 100.0 & 93.9 & 99.2 & 1.19 & 0.56 & 14.4
& 500\,{\scriptsize(500 $\pm$ 0.0)}
& 99.2 & 99.2 & 98.6 & 95.5 & 73.5 \\
\midrule

DeFoG{\tiny~\citep{qin2025defog}}
& 50\,{\scriptsize(50 $\pm$ 0.0)}
& 82.9 & 99.9 & 95.1 & 97.8 & 1.79 & 0.53 & 19.8
& 50\,{\scriptsize(50 $\pm$ 0.0)}
& 91.7 & 91.7 & 91.2 & 92.3 & 57.9 \\

DeFoG{\tiny~\citep{qin2025defog}}
& 500\,{\scriptsize(500 $\pm$ 0.0)}
& 91.5 & 99.9 & 90.0 & 99.2 & 1.47 & 0.55 & 11.5
& 500\,{\scriptsize(500 $\pm$ 0.0)}
& 99.0 & 99.0 & 97.9 & 97.7 & 73.8 \\
\midrule

\rowcolor{gray!15}
\textbf{\ours{} (Ours)}
& 50\,{\scriptsize(35.9 $\pm$ 7.0)}
& 89.5 & 100.0 & 94.4 & 99.1 & 0.98 & 0.55 & 18.8
& 50\,{\scriptsize(40.4 $\pm$ 8.1)}
& 95.2 & 95.2 & 93.6 & 89.9 & 69.3 \\

\rowcolor{gray!15}
\textbf{\ours{} (Ours)}
& 100\,{\scriptsize(47.1 $\pm$ 15.7)}
& 89.9 & 99.9 & 93.0 & 99.2 & 0.96 & 0.56 & 16.0
& 100\,{\scriptsize(49.2 $\pm$ 13.7)}
& 99.2 & 99.2 & 98.1 & 96.9 & 77.7 \\

\rowcolor{gray!15}
\textbf{\ours{} (Ours)}
& 200\,{\scriptsize(57.2 $\pm$ 14.2)}
& 91.9 & 99.9 & 93.1 & 99.4 & 1.08 & 0.58 & 15.5
& 200\,{\scriptsize(78.8 $\pm$ 35.6)}
& 99.4 & 99.4 & 97.9 & 96.8 & 77.3 \\

\bottomrule
\end{tabular}%
}
\vspace{-1.0em}
\end{table}
\textbf{Large molecule generation.}
We evaluate \ours{} on GuacaMol \citep{brown2019guacamol} and Moses \citep{polykovskiy2020molecular} and report results in \cref{tab:large_molecule_gen}. We compare with recent discrete diffusion and flow matching models. \ours{} achieves the best FCD on both datasets with only 100 steps, outperforming the 500-step baselines. 

\vspace{-0.5em}
\newcommand{\stdsmall}[1]{{\scriptstyle\,\pm\,#1}}
\setlength{\intextsep}{3pt}

\setlength{\columnsep}{6pt}
\begin{wraptable}{r}{0.48\textwidth}
\centering
\setlength{\tabcolsep}{1pt} 
\vspace{0\baselineskip}
\caption{Performance on synthetic graphs. Full version in \cref{tab:app_synthetic_graph_gen}.
}
\vspace{-0,5em}
\label{tab:synthetic_graph_gen}

\sisetup{
    table-format = 3.1,
    table-space-text-post = {$\stdsmall{0.0}$},
    table-align-text-post = false
}

\resizebox{\linewidth}{!}{%
\begin{tabular}{@{}lc*{2}{S}|*{2}{S}@{}}
\toprule
& & \multicolumn{2}{c}{Planar}
& \multicolumn{2}{c}{Tree}
\\
\cmidrule(lr){3-4}\cmidrule(lr){5-6}

Model
& Steps
& {V.U.N.\,$\uparrow$} & {Ratio\,$\downarrow$}
& {V.U.N.\,$\uparrow$} & {Ratio\,$\downarrow$}
\\
\midrule

Training set
& {--}
& 100.0 & 1.0
& 100.0 & 1.0
\\

\midrule






EDGE\tiny~\citep{chen2023efficient}
& 1000
& 0.0 & 431.4
& 0.0 & 850.7
\\


DiGress{\tiny~\citep{vignac2022digress}}
& 1000
& 77.5 & 5.1
& 90.0 & 1.6
\\

HSpectre\tiny~\citep{bergmeister2024efficient}
& {--}
& 95.0 & 2.1
& 100.0 & 4.0
\\

GruM\tiny~\citep{jo2023graph}
& {--}
& 90.0 & 1.8
& {--} & {--}
\\

DisCo{\tiny~\citep{xu2024discrete}}
& 500
& 83.6 & {--}
& {--} & {--}
\\

Cometh{\tiny~\citep{siraudin2024cometh}}
& 500
& 92.5 & {--}
& {--} & {--}
\\

Cometh-PC{\tiny~\citep{siraudin2024cometh}}
& {--}
& 99.5 & {--}
& {--} & {--}
\\

CatFlow\tiny~\citep{eijkelboom2024variational}
& {--}
& 80.0 & {--}
& {--} & {--}
\\

\midrule

SimGFM{\tiny~\citep{luosimgfm}}
& 20
& 94.0 {$\stdsmall{4.4}$} & 2.3 {$\stdsmall{0.6}$}
& 88.0 {$\stdsmall{4.8}$} & 2.5 {$\stdsmall{0.9}$}
\\

SimGFM{\tiny~\citep{luosimgfm}}
& 50
& 99.5 {$\stdsmall{1.0}$} & 1.8 {$\stdsmall{0.5}$}
& 97.0 {$\stdsmall{1.0}$} & 2.0 {$\stdsmall{0.7}$}
\\

SimGFM{\tiny~\citep{luosimgfm}}
& 200
& 100.0 {$\stdsmall{0.0}$} & 9.3 {$\stdsmall{2.6}$}
& 99.5 {$\stdsmall{1.0}$}
& 1.5 {$\stdsmall{0.2}$}
\\

SimGFM{\tiny~\citep{luosimgfm}}
& 1000
& 100.0 {$\stdsmall{0.0}$} & 21.0 {$\stdsmall{1.5}$}
& 100.0 {$\stdsmall{0.0}$} & 2.1 {$\stdsmall{0.8}$}
\\

\midrule

DeFoG{\tiny~\citep{qin2025defog}}
& 50
& 95.0 {$\stdsmall{3.2}$} & 3.2 {$\stdsmall{1.1}$}
& 73.5 {$\stdsmall{9.0}$} & 2.5 {$\stdsmall{1.0}$}
\\

DeFoG{\tiny~\citep{qin2025defog}}
& 1000
& 99.5 {$\stdsmall{1.0}$}
& 1.6 {$\stdsmall{0.4}$}
& 96.5 {$\stdsmall{2.6}$}
& 1.6 {$\stdsmall{0.4}$}
\\

\midrule

\rowcolor{gray!15}
\textbf{\ours{} (Ours)}
& 20
& 95.0 {$\stdsmall{3.5}$} & 1.9 {$\stdsmall{0.5}$}
& 93.5 {$\stdsmall{4.6}$} & 2.2 {$\stdsmall{0.8}$}
\\

\rowcolor{gray!15}
\textbf{\ours{} (Ours)}
& 50
& 99.0 {$\stdsmall{1.2}$} & 1.9 {$\stdsmall{0.4}$}
& 98.0 {$\stdsmall{2.9}$} & 2.7 {$\stdsmall{0.9}$}
\\

\rowcolor{gray!15}
\textbf{\ours{} (Ours)}
& 100
& 99.0 {$\stdsmall{1.2}$} & 1.8 {$\stdsmall{0.3}$}
& 98.5 {$\stdsmall{2.0}$} & 1.8 {$\stdsmall{0.4}$}
\\

\rowcolor{gray!15}
\textbf{\ours{} (Ours)}
& 200
& 100.0 {$\stdsmall{0.0}$} & 2.4 {$\stdsmall{0.2}$}
& 100.0 {$\stdsmall{0.0}$} & 2.0 {$\stdsmall{0.8}$}
\\

\bottomrule
\end{tabular}%
\par}
\vspace{-1,2em}
\end{wraptable}
\subsection{Synthetic graphs}
The sampler in \cref{alg:hazard-single-component} achieves strong empirical performance on both small and large molecular graphs using few NFEs, see \cref{sec:molecules_exps}. 
This is possible because the sampling initialization from the empirical node and edge priors $p_0^{X}$ and $p_0^{E}$ enforces sparsity, where most candidate edges are assigned to the no-bond category, and only a few graph edits can be sufficient to generate a valid molecule. 
For graphs requiring more extensive structural changes, where the number of required graph edits exceeds the available NFE budget, we extend \ours{} to perform up to $B$ edits in parallel at each step; see \cref{alg:hazard-parallel}. 
We evaluate on Planar \citep{martinkus2022spectre} and Tree \citep{bergmeister2024efficient}, reporting means and standard deviations over 5 runs of 40 graphs (\cref{tab:synthetic_graph_gen}). With 20 steps, \ours{} achieves 95.0\% and 93.5\% V.U.N. on Planar and Tree, respectively, and 100\% at 200 steps with competitive Ratios.

\vspace{-0.5em}
\subsection{Accelerating pretrained DFM models}
\ours{} exploits the generator decomposition to accelerate pretrained graph DFMs as a drop-in sampler, without retraining or added sampling complexity. 
We evaluate our hazard-guided \ours{} sampler on the pretrained DeFoG model \citep{qin2025defog}.
We compare against two sampling configurations from DeFoG: base parallel Euler sampling and Euler sampling with additional target guidance and stochasticity, controlled by $(\eta,\omega)$, see \cref{fig:defog_hglow_comparison}. 
\ours{} matches or exceeds the validity and FCD performance of 500-step Euler sampling using only 100 steps on GuacaMol and MOSES. With 200 steps, \ours{} achieves quality comparable to the 500-step guided Euler configuration. 
Importantly, our framework significantly improves performance using DeFoG’s base generator, without requiring sampling-time generator modifications that can expand the hyperparameter space.
The full quantitative comparison is provided in \cref{tab:app_large_molecule_gen}.
\begin{figure}[t]
    \centering
    \includegraphics[width=\textwidth]{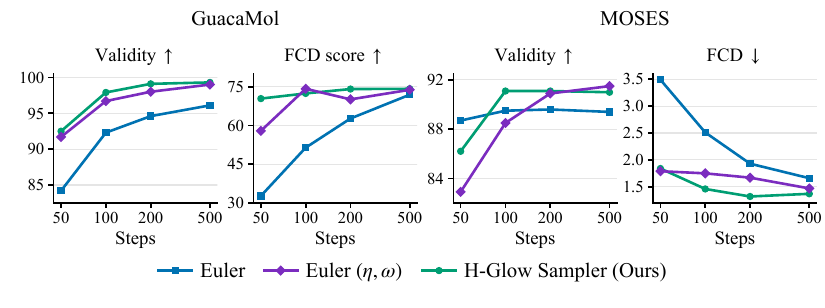}
    \vspace{-2,5em}
    \caption{
    \textbf{Sampler comparison using the same pretrained DeFoG model on GuacaMol and MOSES.} We compare base parallel Euler sampling (blue), Euler sampling with DeFoG’s added target guidance and stochasticity $(\eta,\omega)$ (purple), and our \ours{} sampler applied to DeFoG’s base generator (green). Our \ours{} sampling strategy achieves high sample quality with substantially fewer sampling steps.
    }
    \vspace{-1,2em}
    \label{fig:defog_hglow_comparison}
\end{figure}

\subsection{Ablations}
\textbf{Sample time distortion.}
Uniform time discretization is suboptimal for graph generation, as specific regions of the generation trajectory can benefit from finer control. 
Prior work employs specialized 
\setlength{\intextsep}{3pt}

\setlength{\columnsep}{6pt}
\begin{wrapfigure}{r}{0.35\textwidth}
    \centering
    \vspace{0\baselineskip}
    \begingroup
    \captionof{table}{Clock ablations.}
    \vspace{-0,8em}
    \label{tab:planar_clock_ablation}
    \scriptsize
    \setlength{\tabcolsep}{1pt}
    \renewcommand{\arraystretch}{1.1}

    \resizebox{\linewidth}{!}{%
    \begin{tabular}{lcc|cc}
    \toprule
    & \multicolumn{2}{c}{$T=50$}
    & \multicolumn{2}{c}{$T=100$} \\
    \cmidrule(lr){2-3}\cmidrule(lr){4-5}
    Clock
    & V.U.N. $\uparrow$ & Ratio $\downarrow$
    & V.U.N. $\uparrow$ & Ratio $\downarrow$ \\
    \midrule
    \textbf{Hazard-guided}
    & 99.0 & {1.86}
    & 99.0 & {1.81} \\
    \midrule
    Uniform
    & 99.5 & 23.43
    & {100.0} & 49.11 \\
    Polydec-2
    & {100.0} & 20.20
    & {100.0} & 42.62 \\
    Polydec-10
    & 99.5 & 5.57
    & 99.5 & 12.66 \\
    Polydec-20
    & 95.5 & 3.88
    & 99.5 & 6.08 \\
    \bottomrule
    \end{tabular}%
    }
    \endgroup
    \vspace{-1em}
\end{wrapfigure}
time distortions to redistribute steps along the generation trajectory. For example, on Planar, \citet{qin2025defog} use polydec-2, while \citet{luosimgfm} use $g_k(t)=1-(1-t)^k$ with $k=10$. Our hazard-guided clock instead directly adapts sampling times to predicted graph activity. To isolate its contribution, we compare uniform, polynomial, and hazard-guided schedules under the \ours{} sampler on Planar; see \cref{tab:planar_clock_ablation}. Our hazard-guided clock provides the best trade-off.

\begin{wraptable}{r}{0.35\textwidth}
\vspace{0\baselineskip}
\centering
\setlength{\tabcolsep}{2pt}

\caption{Design choices.}
\label{tab:sampling_ablation}
\vspace{-0.5em}

{\scriptsize
\resizebox{\linewidth}{!}{%
\begin{tabular}{lcc|cc}
\toprule
& \multicolumn{2}{c}{$T=50$}
& \multicolumn{2}{c}{$T=100$} \\
\cmidrule(lr){2-3}\cmidrule(lr){4-5}
Variant
& Val. $\uparrow$ & FCD $\downarrow$
& Val. $\uparrow$ & FCD $\downarrow$ \\
\midrule
\textbf{\ours{}}
& 89.5 & 0.98 & 89.9 & 0.96 \\
\midrule
Hazard-only
& 88.1 & 1.10 & 89.2 & 1.12 \\
w/o early stopping
& 88.1 & 1.41 & 88.7 & 1.82 \\
w/o hazard annealing
& 88.5 & 1.43 & 87.9 & 1.39 \\
\bottomrule
\end{tabular}%
}
\par}
\end{wraptable} 
\paragraph{Design choices.}
\sloppy In our experiments, we use confidence-weighted hazard selection (\cref{eq:confidence-weighted-hazard}), early stopping, and time annealing to the hazard-guided clock.
We ablate our design choices in \cref{tab:sampling_ablation}. 
First, we find that confidence-weighted hazard selection can improve quality over hazard-only selection. Second, we find that removing early stopping reduces sample quality, suggesting that further edits at low predicted activity can introduce errors. 
Also, we find that directly applying the time policy in \cref{eq:hazard-target-time} leads to large time jumps early in the generation process. To moderate this, we interpolate between the current time $t_k$ and the hazard-guided time $\widehat t_k$: \(t_{k+1}=(1-\gamma_k)t_k+\gamma_k\widehat t_k\), where \(\gamma_k\in(0,1]\) limits the time jumps in earlier steps, allowing the model to perform more graph refinements; see \cref{app:additional_experiments} for more details.

\begin{wraptable}{r}{0.35\textwidth}
\vspace{0\baselineskip}
\centering
\setlength{\tabcolsep}{2pt}

\caption{Sampler comparison.}
\label{tab:hglow_sampler_comparison}
\vspace{-0.5em}

{\scriptsize
\resizebox{\linewidth}{!}{%
\begin{tabular}{lcc|cc}
\toprule
& \multicolumn{2}{c}{$T=50$}
& \multicolumn{2}{c}{$T=100$} \\
\cmidrule(lr){2-3}\cmidrule(lr){4-5}
Sampler
& Val. $\uparrow$ & FCD $\downarrow$
& Val. $\uparrow$ & FCD $\downarrow$ \\
\midrule
\textbf{\ours{}}
& 89.5 & 0.98 & 89.9 & 0.96 \\
\midrule
Euler
& 80.3 & 1.38 & 85.8 & 1.38 \\
RVF
& 86.4 & 1.03 & 87.7 & 1.08 \\
\bottomrule
\end{tabular}%
}
\par}
\end{wraptable}
\textbf{Sampler comparison.}
Using the same \ours{} model trained on MOSES with the decomposed hazard-jump parametrization, we compare our \ours{} sampler with the parallel Euler sampler and the RVF variant proposed in \citet{luosimgfm}; see \cref{tab:hglow_sampler_comparison}. 
Compared to other samplers, our \ours{} sampler achieves higher validity and lower FCD at $T=50$ than Euler and RVF at $T=100$, demonstrating more effective use of the sampling budget.
In \cref{app:additional_experiments}, we provide an ablation comparing denoiser and hazard–jump parameterizations under different sampling strategies.

\vspace{-0,5em}
\section{Related work}\label{sec:related_work}
\vspace{-0,5em}
\textbf{Discrete diffusion and flow matching for graphs.}
Discrete-state graph diffusion models rely on categorical corruption toward simple priors, such as uniform or empirical marginal distributions over node and edge attributes 
\citep{austin2021structured,vignac2022digress}, with recent extensions introducing re-noising and learned critics to identify corrupted nodes and edges \citep{boget2025simple} to improve sample quality. 
Recent graph diffusion approaches build on the continuous-time formulations based on CTMCs, which allow flexible sampling-time discretizations \citep{campbell2022continuous,xu2024discrete,siraudin2024cometh}; however, their reverse dynamics remain tied to the forward noising process.
In contrast, discrete flow matching (DFM) \citep{campbell2024generative,gat2024discrete} decouples training from the choice of sampling dynamics. Recent graph adaptations \citep{qin2025defog,luosimgfm} leverage the training-sampling decoupling and improve performance by optimizing rate matrices and time distortions during inference. While diffusion- and flow-based graph models typically learn a denoiser network, \citet{li2026neural} recently proposed a diffusion model that directly parameterizes the network to predict hazards and conditional jumps. We adapt this parameterization within the generator-matching framework \citep{holderrieth2025generator} and propose a structured generator-learning framework for graphs.

\textbf{Improved sampling.}
Continuous-time approaches adopt finite-time approximators to simulate the CTMC trajectory, including independent Euler updates and $\tau$-leaping \citep{campbell2022continuous,gat2024discrete}, where additional predictor-corrector steps can be used for further refinement. 
\citet{liu2025think} propose decomposing the generation into planning and denoising, and introduce a latent variable that indicates whether a token is corrupted. In contrast, \ours{} decomposes the generative process by directly exploiting the decomposition of CTMC generators into hazards and conditional jumps to guide sampling without an auxiliary planner network. In masked diffusion language models, iterative refinement is achieved by remasking previously decoded tokens, thereby enabling additional computation to improve sample quality \citet{wang2026remasking}.
\vspace{-0,6em}
\section{Conclusion}
We present \ours{}, a hazard-guided framework for accelerating graph generation. By factorizing generators into hazards and conditional jump distributions, \ours{} leverages the event structure of CTMC generators to coordinate graph edits. Under the same decomposition, we propose a novel DFM model for graph generation. We limit our experiments to accelerate pretrained DFM models; however, \ours{} can be readily applied to discrete diffusion models with accessible generators, which we leave for future work. 
Also, \ours{} benefits from the empirical prior over node and edge attributes, where initialized edges are dominated by the no-bond category, unlocking accelerated molecular generation within a few NFEs. To accommodate graphs that require extensive updates, we provide an extension of \ours{} that enables parallel updates per step. Future work could explore selection strategies that exploit graph structure to coordinate parallel edits more effectively.

\subsection*{AI use statement}
In this work, we used generative AI tools for language editing, including grammar, spelling, and word choice, as well as for assisting in polishing mathematical proofs and formulating ideas. We have not used generative AI tools to generate synthetic datasets, develop theoretical models or conceptual frameworks, provide critical ingredients for proving mathematical claims,  propose or refine hypotheses, design or provide feedback on research methodology or experiments, implement methods, assist with translation, clean or reformat datasets, support qualitative or thematic data analysis, or interpret results. We have reviewed all AI-assisted work. We take responsibility for the final content of this work, including text, claims, or artifacts produced with the aid of generative AI.

\subsection*{Ethics statement}
In this work, we propose a generative model for discrete data and apply it to the generation of molecules and synthetic graphs. As generative models can be applied in various fields, we see potential positive and negative societal impacts. Generative modeling can accelerate scientific discovery; however, there are potential societal consequences, specifically those related to the generation of unsafe drugs.

\subsection*{Reproducibility statement}
The experimental setup is described in \cref{app:exp_details} and \cref{app:app_sampling}.

\newpage

\bibliography{iclr2027_conference}
\bibliographystyle{iclr2027_conference}

\newpage
\appendix
\crefalias{section}{appendix}
\crefname{appendix}{Appendix}{Appendices}
\Crefname{appendix}{Appendix}{Appendices}

\raggedbottom

\section{Proofs}\label{app:objective}
\begingroup
\renewcommand{\thetheorem}{1}
\begin{theorem}[Decomposed graph generator matching objective]
Assume that the conditional target and learned graph
generators follow the single-component construction in
\cref{prop:componentwise_graph_generator} and admit hazard-jump decompositions. Then, the conditional generator matching objective decomposes as:
\begin{equation}
\mathcal{L}(\theta)
=
\mathbb{E}_{\substack{
G_1\sim p_1,\;t\sim\mathcal{T},\\
G_t\sim p_{t\mid 1}(\cdot\mid G_1)
}}
\left[
\sum_{n=1}^{N}\mathcal{L}_{\theta,t}^{X,(n)}
+
\sum_{1\le i<j\le N}\mathcal{L}_{\theta,t}^{E,(ij)}
\right],
\end{equation}
where, for each graph component $c$, 
\begin{equation}
\mathcal{L}_{\theta,t}^{(c)}
=
D_{\mathrm{Poi}}\!\left(
\hazard_t^{(c)}
\,\middle\|\,
\hazard_{\theta,t}^{(c)}
\right)
+
\hazard_t^{(c)}
D_{\mathrm{KL}}\!\left(
\boldsymbol{\jump}_t^{(c)}
\,\middle\|\,
\boldsymbol{\jump}_{\theta,t}^{(c)}
\right).
\label{eq:app-componentwise-decomposed-loss}
\end{equation}
Here, $D_{\mathrm{Poi}}(a\|b)=a\log\frac{a}{b}-a+b$.
\end{theorem}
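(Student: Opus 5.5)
The plan is to expand the inner sum $\sum_{\widetilde G\neq G_t}$ using the single-component structure of \cref{prop:componentwise_graph_generator}, and then, for each component, insert the hazard--jump factorization of \cref{eq:component_generator}. The Poisson divergence and the KL term should then appear directly.

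\textbf{Step 1: reduce to single-component neighbours.} Both $R_{\theta,t}$ and the conditional target $R_t(\cdot,\cdot\mid G_1)$ vanish unless $\widetilde G$ differs from $G_t$ in exactly one node or one edge. Every other $\widetilde G\neq G_t$ contributes $0-0\cdot\log 0$. I will adopt the convention that such terms are dropped, which is standard in the Bregman form where terms with zero target rate contribute only $R_\theta=0$. The neighbours $\widetilde G\neq G_t$ of $G_t$ are in bijection with pairs $(c,\tilde z)$, where $c$ is a component and $\tilde z\neq z_t^{(c)}$. Distinct $c$ give disjoint sets of $\widetilde G$, so for each such $\widetilde G$ exactly one delta in the sum of \cref{prop:componentwise_graph_generator} is nonzero. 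Hence $R(G_t,\widetilde G)=R^{(c)}(z_t^{(c)},\tilde z;G_t)$, and the sum becomes $\sum_c\sum_{\tilde z\neq z_t^{(c)}}[\,R_\theta^{(c)}-R^{(c)}\log R_\theta^{(c)}\,]$. This already gives the split into node and edge terms.

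\textbf{Step 2: per-component algebra.} Fix $c$ and write $R_\theta^{(c)}=\tau_\theta\pi_\theta(\tilde z)$ and $R^{(c)}=\tau\pi(\tilde z)$, using $\sum_{\tilde z\neq z}\pi=\sum_{\tilde z\neq z}\pi_\theta=1$. Then
\[
\sum_{\tilde z\neq z}\bigl[\tau_\theta\pi_\theta(\tilde z)-\tau\pi(\tilde z)\log(\tau_\theta\pi_\theta(\tilde z))\bigr]
=\tau_\theta-\tau\log\tau_\theta-\tau\sum_{\tilde z}\pi(\tilde z)\log\pi_\theta(\tilde z).
\]
Adding and subtracting the $\theta$-independent quantities $\tau\log\tau-\tau$ and $\tau\sum\pi\log\pi$ rewrites this as $D_{\mathrm{Poi}}(\tau\|\tau_\theta)+\tau D_{\mathrm{KL}}(\pi\|\pi_\theta)$ minus a quantity that depends only on the target. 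That quantity is $\tau\log\tau-\tau+\tau\sum\pi\log\pi=\sum_{\tilde z}(R\log R-R)$, and it is absorbed into the constant $C$ after taking the expectation over $(G_1,t,G_t)$. Linearity of expectation then yields \cref{eq:componentwise-decomposed-loss-full}.

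\textbf{Main obstacle: degenerate cases and the constant.} When the target hazard is $\tau=0$, as for the DFM conditional target when $z_t^{(c)}=z_1^{(c)}$, the conditional jump $\pi$ is undefined. I will use $0\log0=0$ and read $\tau D_{\mathrm{KL}}$ as $0$, so the component term reduces to $D_{\mathrm{Poi}}(0\|\tau_\theta)=\tau_\theta$. This matches the raw sum directly. When $\tau>0$ but $\pi(\tilde z)=0$ for some $\tilde z$, the convention $0\log 0=0$ again makes both sides agree.

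I also need to check that $C$ is precisely the expected negative of the target's $\sum(R\log R-R)$ term. Since the problem only states $C$ as a constant, I will identify it explicitly in this way so that the identity is exact and not merely an equality up to a constant.
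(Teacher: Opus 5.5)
Your proposal is correct and follows essentially the same route as the paper. The paper starts directly from the generalized-KL form $\sum_{\widetilde G\neq G_t}\bigl[R\log(R/R_\theta)-R+R_\theta\bigr]$, which is exactly your objective with $C=+\mathbb{E}\bigl[\sum(R\log R-R)\bigr]$; note the plus sign, which is what cancels the target-only term you subtract. It then splits over single-component transitions, substitutes the hazard--jump factorization, and treats $\tau_t^{(c)}=0$ as $D_{\mathrm{Poi}}(0\|\tau_{\theta,t}^{(c)})=\tau_{\theta,t}^{(c)}$ just as you do, additionally noting that both sides are $+\infty$ when a learned rate vanishes where the target rate is positive.
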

\endgroup

\begin{proof}
We write the conditional generator matching objective in its generalized KL form:
\begin{align}
\mathcal{L}(\theta)
=
\mathbb{E}\Bigg[
\sum_{\widetilde G\neq G_t}
\Bigg(
&
R_t(G_t,\widetilde G\mid G_1)
\log
\frac{R_t(G_t,\widetilde G\mid G_1)}
     {R_{\theta,t}(G_t,\widetilde G)}
\nonumber\\
&-
R_t(G_t,\widetilde G\mid G_1)
+
R_{\theta,t}(G_t,\widetilde G)
\Bigg)
\Bigg],
\end{align}
where the expectation is over
$G_1\sim p_1$, $t\sim\mathcal{T}$, and
$G_t\sim p_{t\mid 1}(\cdot\mid G_1)$.

Under the construction in
\cref{prop:componentwise_graph_generator}, every transition
with a nonzero off-diagonal rate changes exactly one
component. Thus, the sum over graph transitions separates
into sums over node and edge transitions.

Fix a graph component $c$. For readability, we define
\begin{align}
r_t^{(c)}(\tilde z)
&=
R_t^{(c)}
\left(z_t^{(c)},\tilde z\mid G_1\right),
\\
r_{\theta,t}^{(c)}(\tilde z)
&=
R_{\theta,t}^{(c)}
\left(z_t^{(c)},\tilde z;G_t\right),
\end{align}
and suppress the conditioning arguments of the hazards
and jump distributions. Therefore,
\begin{equation}
r_t^{(c)}(\tilde z)
=
\hazard_t^{(c)}\jump_t^{(c)}(\tilde z),
\qquad
r_{\theta,t}^{(c)}(\tilde z)
=
\hazard_{\theta,t}^{(c)}
\jump_{\theta,t}^{(c)}(\tilde z),
\quad \text{for } \tilde z\neq z_t^{(c)}.
\end{equation}

The contribution of component $c$ to the objective is
\begin{equation}
\mathcal{L}_{\theta,t}^{(c)}
= \sum_{\tilde z\neq z_t^{(c)}}
\left[
r_t^{(c)}(\tilde z)
\log
\frac{r_t^{(c)}(\tilde z)}
     {r_{\theta,t}^{(c)}(\tilde z)}-r_t^{(c)}(\tilde z)+r_{\theta,t}^{(c)}(\tilde z)\right].
\end{equation}
First suppose that $\hazard_t^{(c)}>0$ and that the
learned rates are positive wherever the target rates
are positive. Substituting the hazard-jump
decompositions and using
\begin{equation}
\sum_{\tilde z\neq z_t^{(c)}}
\jump_t^{(c)}(\tilde z)
=
\sum_{\tilde z\neq z_t^{(c)}}
\jump_{\theta,t}^{(c)}(\tilde z)
=
1,
\end{equation}
we obtain
\begin{align}
\mathcal{L}_{\theta,t}^{(c)}
={}&
\hazard_t^{(c)}
\log
\frac{\hazard_t^{(c)}}
     {\hazard_{\theta,t}^{(c)}}
-
\hazard_t^{(c)}
+
\hazard_{\theta,t}^{(c)}
\nonumber+
\hazard_t^{(c)}
\sum_{\tilde z\neq z_t^{(c)}}
\jump_t^{(c)}(\tilde z)
\log
\frac{\jump_t^{(c)}(\tilde z)}
     {\jump_{\theta,t}^{(c)}(\tilde z)}
\\
={}&
D_{\mathrm{Poi}}\!\left(
\hazard_t^{(c)}
\,\middle\|\,
\hazard_{\theta,t}^{(c)}
\right)+
\hazard_t^{(c)}D_{\mathrm{KL}}\!\left(\boldsymbol{\jump}_t^{(c)}\,\middle\|\,
\boldsymbol{\jump}_{\theta,t}^{(c)}\right).
\end{align}
Terms with zero target mass are interpreted using the
standard convention $0\log(0/q)=0$.
If a learned rate is zero where the target rate is
positive, both sides are $+\infty$, so the identity also holds in this case.
When $\hazard_t^{(c)}=0$, all target rates vanish and
\begin{equation}
\mathcal{L}_{\theta,t}^{(c)}
= \sum_{\tilde z\neq z_t^{(c)}} r_{\theta,t}^{(c)}(\tilde z)
=\hazard_{\theta,t}^{(c)}
=D_{\mathrm{Poi}}\!\left(0\,\middle\|\,\hazard_{\theta,t}^{(c)}\right).
\end{equation}
The target jump distribution may then be chosen arbitrarily over alternative attributes, and its hazard-weighted KL term is defined as zero.
Components with no alternative attributes have zero hazard and contribute zero.

Finally, summing over node and edge components gives
\begin{equation}
\mathcal{L}(\theta)
=
\mathbb{E}_{\substack{
G_1\sim p_1,\;t\sim\mathcal{T},\\
G_t\sim p_{t\mid 1}(\cdot\mid G_1)
}}
\left[
\sum_{n=1}^{N}\mathcal{L}_{\theta,t}^{X,(n)}
+
\sum_{1\le i<j\le N}\mathcal{L}_{\theta,t}^{E,(ij)}
\right],
\end{equation}
which proves the result.
\end{proof}

\begingroup
\renewcommand{\theproposition}{2}
\begin{proposition}[Graph hazard]
For the generator in
\cref{prop:componentwise_graph_generator},
the total graph hazard is given by the sum of the componentwise node and edge hazards:
\begin{equation}
H_{\theta,t}\left(G_t\right)
=
\sum_{n}\hazard_{\theta,t}^{X,(n)}\left(x_t^{(n)}; G_t\right)
+
\sum_{i<j}\hazard_{\theta,t}^{E,(ij)}\left(e_t^{(ij)}; G_t\right).
\label{eq:total-graph-exit-rate}
\end{equation}
\end{proposition}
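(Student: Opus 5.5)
The plan is to compute the total graph hazard straight from its definition as the negative diagonal entry of the graph generator. Equivalently, it is the sum of all off-diagonal rates out of $G_t$: $H_{\theta,t}(G_t) = -R_{\theta,t}(G_t,G_t) = \sum_{\widetilde G\neq G_t} R_{\theta,t}(G_t,\widetilde G)$. The sum over $\widetilde G$ is then evaluated using the factorized form in \cref{prop:componentwise_graph_generator}.

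The first step is to classify the states $\widetilde G\neq G_t$ by how many components they change.
\begin{itemize}
\item If $\widetilde G$ differs from $G_t$ in two or more components, every indicator $\delta(G_t^{\setminus(n)},\widetilde G^{\setminus(n)})$ and $\delta(G_t^{\setminus(ij)},\widetilde G^{\setminus(ij)})$ vanishes. Hence $R_{\theta,t}(G_t,\widetilde G)=0$.
\item If $\widetilde G$ differs only at node $n$, with $\tilde x\neq x_t^{(n)}$, exactly one indicator survives, namely the one for node $n$. The other node indicators fail because node $n$ differs, and all edge indicators fail for the same reason. So the rate equals $R^{X,(n)}_{\theta,t}(x_t^{(n)},\tilde x;G_t)$.
\item The case where $\widetilde G$ differs only at edge $(ij)$ is handled the same way.
\end{itemize}
The one subtle point is the diagonal terms inside the sum of \cref{prop:componentwise_graph_generator}. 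For $\widetilde G\neq G_t$, a component generator is only ever evaluated off its diagonal, because $\tilde x^{(n)}=x_t^{(n)}$ together with agreement elsewhere would force $\widetilde G=G_t$.

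The second step reindexes the sum. The set of single-component changes is in bijection with the disjoint union, over nodes $n$, of the sets $\{\tilde x\neq x_t^{(n)}\}$, together with the analogous sets for edges. This gives
\[
H_{\theta,t}(G_t)=\sum_n\sum_{\tilde x\neq x_t^{(n)}}R^{X,(n)}_{\theta,t}(x_t^{(n)},\tilde x;G_t)+\sum_{i<j}\sum_{\tilde e\neq e_t^{(ij)}}R^{E,(ij)}_{\theta,t}(e_t^{(ij)},\tilde e;G_t).
\]
The third step applies \cref{eq:component_generator_diagonal}, under which each inner sum equals the componentwise hazard $\hazard^{X,(n)}_{\theta,t}$ or $\hazard^{E,(ij)}_{\theta,t}$. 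That yields the claimed formula.

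As a consistency check, I would also compute the diagonal entry $R_{\theta,t}(G_t,G_t)$ directly from \cref{prop:componentwise_graph_generator}. There every indicator equals $1$, and the expression reduces to $\sum_n R^{X,(n)}_{\theta,t}(x_t^{(n)},x_t^{(n)})+\sum_{i<j}R^{E,(ij)}_{\theta,t}(e_t^{(ij)},e_t^{(ij)})$, which is $-\sum_c\hazard^{(c)}_{\theta,t}$. This agrees with the row-sum computation, so the graph generator is a valid rate matrix. The only real obstacle is this careful bookkeeping of which $\delta$-indicators survive; there is no analytic difficulty.
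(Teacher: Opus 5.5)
Your proposal is correct and follows essentially the same route as the paper. Both write $H_{\theta,t}(G_t)$ as the sum of off-diagonal rates out of $G_t$, note that only single-component changes carry nonzero rate, partition by the changed node or edge, and identify each inner sum with the componentwise hazard; your explicit tracking of which $\delta$-indicators survive, and your diagonal check that $-R_{\theta,t}(G_t,G_t)$ gives the same value, are just more careful than the paper, which simply asserts the single-component property.
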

\endgroup

\begin{proof}
By definition, the total graph hazard is the sum of the transition
rates to all graphs different from the current graph:
\begin{equation}
H_{\theta,t}(G_t)
=
\sum_{G'\neq G_t}
R_{\theta,t}(G_t,G').
\end{equation}
Since the graph generator permits only single-component transitions,
every graph $G'\neq G_t$ with a nonzero transition rate differs from
$G_t$ in exactly one node or one edge attribute. Hence, the sum can be
partitioned according to the component that changes:
\begin{align}
H_{\theta,t}(G_t)
&=
\sum_{n}
\sum_{\tilde{x}\neq x_t^{(n)}}
R_{\theta,t}^{X,(n)}
\left(x_t^{(n)},\tilde{x}; G_t\right)
\nonumber\\
&\quad+
\sum_{i<j}
\sum_{\tilde{e}\neq e_t^{(ij)}}
R_{\theta,t}^{E,(ij)}
\left(e_t^{(ij)},\tilde{e}; G_t\right).
\end{align}
The componentwise hazards are defined as the corresponding sums of
outgoing rates:
\begin{align}
\hazard_{\theta,t}^{X,(n)}\left(x_t^{(n)}; G_t\right)
&=
\sum_{\tilde{x}\neq x_t^{(n)}}
R_{\theta,t}^{X,(n)}
\left(x_t^{(n)},\tilde{x}; G_t\right),\\
\hazard_{\theta,t}^{E,(ij)}\left(e_t^{(ij)}; G_t\right)
&=
\sum_{\tilde{e}\neq e_t^{(ij)}}
R_{\theta,t}^{E,(ij)}
\left(e_t^{(ij)},\tilde{e}; G_t\right).
\end{align}
Substituting these definitions yields
\eqref{eq:total-graph-exit-rate}.
\end{proof}

\section{Experimental details}\label{app:exp_details}
\begin{table}[t]
\centering
\caption{Dataset statistics. $E$ and $X$ denote the numbers of present-edge
and node categories, respectively. The no-edge category is excluded from
$E$. $B$ counts directed edge entries, following the convention of prior
work, while $N$ denotes the number of nodes.}
\label{tab:dataset_statistics}
\vspace{-0.5em}
\resizebox{\linewidth}{!}{%
\begin{tabular}{lrc*{6}{r}}
\toprule
Dataset
& \# Graphs
& Train/Val/Test
& $E$
& $X$
& Avg.\ $B$
& Max.\ $B$
& Avg.\ $N$
& Max.\ $N$ \\
\midrule
QM9
& 130,831
& 97,734/20,042/13,055
& 4 & 4
& 18.9 & 28
& 8.8 & 9 \\

MOSES
& 1,936,962
& 1,584,663/176,225/176,074
& 4 & 8
& 46.5 & 62
& 21.7 & 27 \\

GuacaMol
& 1,398,213
& 1,118,633/69,926/209,654
& 4 & 12
& 60.4 & 176
& 27.8 & 88 \\

Planar
& 200
& 128/32/40
& 1 & 1
& 355.7 & 362
& 64.0 & 64 \\

Tree
& 200
& 128/32/40
& 1 & 1
& 126.0 & 126
& 64.0 & 64 \\
\bottomrule
\end{tabular}%
}
\end{table}
\subsection{Datasets and Metrics}\label{app:datasets} 
We evaluate \ours{} on the molecular graph benchmarks QM9, MOSES, and
GuacaMol, following the preprocessing and dataset splits used in prior
discrete graph generative models
\citep{vignac2022digress,siraudin2024cometh,qin2025defog,xu2024discrete}. 
QM9 license is CC BY 4.0, MOSES and GuacaMol datasets are licensed under the MIT License. 
Molecules are represented as attributed undirected graphs, where nodes
correspond to atom types and edges encode bond types, with the absence of a
bond treated as an additional edge category. For each generated graph, the number of nodes is sampled from the empirical graph-size distribution of the
training set. We summarize all data statistics in \cref{tab:dataset_statistics}.

\paragraph{QM9.}
We evaluate \ours{} on QM9 \citep{ramakrishnan2014quantum}. Following prior work \citep{vignac2022digress}, we kekulize the molecules using the RDKit library and remove the hydrogen atoms. We use the same data split as \citet{vignac2022digress} and train on 100K molecules. For the evaluation, we sample 10K molecules and report four metrics. The \textbf{Validity} is measured by sanitizing the sampled molecules and converting them to SMILES strings using the RDKit library. We also report the \textbf{Relaxed Validity}, which accounts for charged molecules. \textbf{Uniqueness} is measured across valid molecules, and it captures the proportion of non-isomorphic graphs within the generated samples. We also report the \textbf{Fréchet ChemNet Distance (FCD)} metric, which embeds the generated samples and the test data using the ChemNet neural network and compares the resulting distributions using the Wasserstein-2 distance.

\paragraph{MOSES.}
We evaluate \ours{} on the MOSES \citep{polykovskiy2020molecular} benchmark. 
We sample 25K samples and report the {Validity}, as well as the Uniqueness. We further report the \textbf{Novelty}, which measures how many of these samples are non-isomorphic to any graph in the training set. We further include the \textbf{Filters} metric, which indicates the percentage of generated samples that pass the filters applied when constructing the
dataset. In addition, we report \textbf{Scaffold similarity (Scaf)}, which compares the frequencies of the Bemis-Murcko scaffolds in the generated set and a reference set, as well as the \textbf{Similarity to the nearest neighbor (SNN)}, which computes the average Tanimoto similarity between the molecular fingerprints of a generated set and the fingerprints of the molecules of a
reference set

\paragraph{Guacamol.}
We evaluate \ours{} on the Guacamol \citep{brown2019guacamol} benchmark. 
We note that, as Guacamol includes molecules with charges, the generated graphs are converted to charged molecules based on the relaxed validity criterion used by \citet{jo2022score} before being translated to the corresponding SMILES strings.

We sample 10K samples following prior baselines and report the {Validity}, as well as the percentage of \textbf{Valid and Unique (V.U.)} samples, and \textbf{Valid, Unique, and Novel (V.U.N.)} samples. The GuacaMol benchmark computes two scores, the \textbf{FCD score} and the \textbf{KL divergences (KL)} between the distributions of a set of physicochemical descriptors in the training set and the generated set.

\paragraph{Synthetic graphs.}
We evaluate the generation performance of \ours{} using the synthetic benchmarks Planar \citep{martinkus2022spectre} and Tree \citep{bergmeister2024efficient}. Planar graphs are connected planar graphs that can be drawn on a plane without any edges crossing, while tree graphs are connected acyclic graphs. 

We follow prior work \citep{vignac2022digress,qin2025defog} and report V.U.N, which is the percentage of valid, unique, and novel graphs, as well as the  Ratio metric.
The ratio is computed by first calculating the MMD distances between the training and test sets for the same graph statistics. The final Ratio metric is then computed by dividing the average MMD distance between the generated sets and test sets by the average MMD distance between the training and test sets.

\subsection{Baselines.}
We compare against representative discrete graph diffusion and flow-matching
methods, including DiGress~\citep{vignac2022digress},
DISCO~\citep{xu2024discrete}, COMETH~\citep{siraudin2024cometh}, and
DeFoG~\citep{qin2025defog}. We additionally use other iterative denoising-based methods whenever comparable results are available. We note that we re-evaluate the released checkpoints from \citet{qin2025defog}, where the performance is generally consistent with the paper’s results, with minor variations attributed to training and sampling stochasticity.

\subsection{Architecture and training settings}\label{app:training_settings}

\paragraph{Graph-transformer backbone.}
Following DiGress~\citep{vignac2022digress}, we use a graph transformer that jointly processes node, edge, and graph-level features. Given a noisy graph $G_t$ and time $t$, the network produces representations for every node and unordered node pair. We retain the multi-head attention backbone and introduce
a decomposed output parameterization with separate predictions for hazard and conditional jumps. We augment the input with Relative Random Walk Probabilities (RRWP) \citep{vignac2022digress,qin2025defog}.

\textbf{Parametrization.}
\ours{} directly parameterizes the componentwise generators
through separate hazard and jump prediction heads:
\[
f_\theta(G_t, t)
=
\left(
    \boldsymbol{\hazard}_{\theta,t}^{X},
    \boldsymbol{\jump}_{\theta,t}^{X},
    \boldsymbol{\hazard}_{\theta,t}^{E},
    \boldsymbol{\jump}_{\theta,t}^{E}
\right),
\]
where $\boldsymbol{\hazard}_{\theta,t}^{X}$ and $\boldsymbol{\hazard}_{\theta,t}^{E}$ are non-negative hazards, and $\boldsymbol{\jump}_{\theta,t}^{X}$ and $\boldsymbol{\jump}_{\theta,t}^{E}$ are the predicted jump distributions satisfying: 
$
    \sum_{\tilde z\neq z_t^{(c)}}
    \jump_{\theta,t}^{(c)}
    (\tilde z\mid z_t^{(c)},G_t)
     =1, \text{ and }
   \jump_{\theta,t}^{(c)}
    (z_t^{(c)}\mid z_t^{(c)},G_t)
     =0. 
$
\ours{} directly parameterizes the componentwise generators through separate hazard and conditional jump predictions.
For a component $c$ with transformer representation $h_t^{(c)}$, we first compute$u_t^{(c)}=\operatorname{ReLU}\left(W_h h_t^{(c)}+b_h\right).$
The network produces logits:
$
s^{(c)} = w_a^\top u_t^{(c)} + b_a, \ell^{(c)} = W_r u_t^{(c)} + b_r.
$
We parameterize the hazard as
\[
\tau_{\theta,t}^{(c)} = a_t\,\sigma\!\left(s^{(c)}\right),
\qquad
a_t = \frac{\dot{\alpha}_t}{1-\alpha_t},
\]
where $\sigma$ denotes the sigmoid function, ensuring
$\tau_{\theta,t}^{(c)} \in [0,a_t]$.

We note that for the jump head, we mask the
current category $i=z_t^{(c)}$ and apply softmax over the
remaining categories:
\[
\jump{}_{\theta,t}^{(c)}(j)
=
\begin{cases}
\displaystyle
\frac{\exp(\ell_{j}^{(c)})}
{\sum_{k\neq i}\exp(\ell_{k}^{(c)})},
& j\neq i,\\[6pt]
0, & j=i.
\end{cases}
\]
We use separate node and edge output MLPs. Each MLP shares its hidden features between the hazard and jump projections. Edge logits are symmetrized across node pairs.


\paragraph{Settings.}
The shared componentwise heads, permutation-compatible
structural features, and symmetric edge outputs preserve the permutation equivariance of the graph-transformer backbone. We provide an overview of the training hyperparameters of \ours{} in \cref{tab:training_settings}. We note that we use as prior distribution $p_0$ the empirical marginal prior over node and edge attributes \citep{vignac2022digress}. We further use the train time distortion setting from \citet{qin2025defog}. We note that for sampling, we use \ours{}'s inferred time discretization, as opposed to the sample time distortions used in \citet{qin2025defog}, see \cref{app:app_sampling}.
\begin{table}[t]
\centering
\caption{Overview of the training hyperparameters of \ours{}.}
\label{tab:training_settings}
\resizebox{\linewidth}{!}{%
\begin{tabular}{lccccc}
\hline
& QM9 & GuacaMol & MOSES & Planar & Tree \\
\hline
Epochs
& 1000 & 250 & 300 & 100000 & 100000 \\
Learning rate
& $2\times10^{-4}$ & $2\times10^{-4}$ & $2\times10^{-4}$
& $2\times10^{-4}$ & $2\times10^{-4}$ \\
Optimizer
& AdamW & AdamW & AdamW & AdamW & AdamW \\
AMSGrad
& Yes & Yes & Yes & Yes & Yes \\
RRWP steps
& 12 & 20 & 20 & 32 & 128 \\
Number of layers
& 9 & 12 & 12 & 10 & 10 \\
Number of heads
& 8 & 8 & 8 & 8 & 8 \\
Prior distribution
& Marginal & Marginal & Marginal & Marginal & Marginal \\
Train Distortion
& Identity & Polydec & Polydec & Identity & Polydec-2 \\
Sample Distortion
& Adaptive (Ours) & Adaptive (Ours) & Adaptive (Ours)
& Adaptive (Ours) & Adaptive (Ours) \\
\hline
\end{tabular}%
}
\end{table}

\section{Sampling details}\label{app:app_sampling}
\subsection{Derivation of the Hazard-based time allocation}
\label{app:hazard-time}

For readability, we initially suppress the dependence of componentwise generators on $\theta$, $t$, and $G_t$. For each component $c$, we follow the uniformization
construction in \citep{rao2013fast} and define:
\begin{equation}
\Lambda^{(c)}
=
\sum_{\tilde z}
\max_{z\neq\tilde z} R^{(c)}(z,\tilde z).
\end{equation}
This sums the largest incoming off-diagonal rate for each
destination and bounds every row's exit hazard:
\begin{equation}
\sum_{\tilde z\neq z} R^{(c)}(z,\tilde z)
\leq \Lambda^{(c)},
\quad \forall z.
\end{equation}
For a component with only one possible attribute value, we
set $\Lambda^{(c)}=0$. For $\Lambda^{(c)}>0$, we define
\begin{equation}
\mathbf{P}^{(c)}
=
\mathbf{I}
+
\frac{\mathbf{R}^{(c)}}{\Lambda^{(c)}}.
\end{equation}
The bound above ensures that all entries of
$\mathbf{P}^{(c)}$ are nonnegative. Its rows sum to one
because the rows of $\mathbf{R}^{(c)}$ sum to zero.
Thus, $\mathbf{P}^{(c)}$ is a stochastic matrix, with
\begin{equation}
P^{(c)}(z_t^{(c)},z_t^{(c)})
=
1-\frac{\hazard^{(c)}}{\Lambda^{(c)}},
\qquad
\hazard^{(c)}
=
-R^{(c)}(z_t^{(c)},z_t^{(c)}).
\end{equation}
This diagonal entry is the probability of retaining the
current attribute at one uniformization event for the fixed
component matrix. If $\Lambda^{(c)}=0$, then
$\mathbf{R}^{(c)}=\mathbf{0}$, and we set
$\mathbf{P}^{(c)}=\mathbf{I}$ and define the normalized
hazard to be zero.

As a sampling policy, we choose the target time by assigning
equal weight to the mean node and edge retention probabilities:
\begin{equation}\label{eq:time_infered_from_rate}
\widehat t
=
1-\frac{1}{2}
\left(
\frac{1}{N}\sum_{i=1}^{N}
\frac{\hazard_{\theta,t}^{(i)}}{\Lambda_{\theta,t}^{(i)}}
+
\frac{1}{\binom{N}{2}}\sum_{1\leq i<j\leq N}
\frac{\hazard_{\theta,t}^{(ij)}}{\Lambda_{\theta,t}^{(ij)}}
\right).
\end{equation}
Since every normalized hazard lies in $[0,1]$, this definition
yields $\widehat t\in[0,1]$. 

\paragraph{Denoiser-based DFM generators.}
We first instantiate the policy under the base DFM generator
construction \citep{gat2024discrete}. Let
$a_t=\dot\alpha_t/(1-\alpha_t)$, with $\alpha_t<1$ and
$\dot\alpha_t>0$. We evaluate the denoiser at $G_t$ and
hold its predicted distribution fixed when constructing
the component matrix:
\begin{equation}
R_{\theta,t}^{(c)}(z,\tilde z;G_t)
=
a_t
\left[
p_{1\mid t}^{\theta,(c)}(\tilde z\mid G_t)
-
\delta(z,\tilde z)
\right].
\end{equation}
For $z\neq\tilde z$, these entries are independent of the
dummy source attribute $z$. Thus, for a component with at
least two possible attribute values,
\begin{equation}
\begin{aligned}
\Lambda_{\theta,t}^{(c)}
&=
\sum_{\tilde z}
\max_{z\neq\tilde z}
R_{\theta,t}^{(c)}(z,\tilde z;G_t)
\\
&=
a_t\sum_{\tilde z}
p_{1\mid t}^{\theta,(c)}(\tilde z\mid G_t)
=
a_t.
\end{aligned}
\end{equation}
Components with only one possible attribute value have
zero hazard and contribute zero. Substituting into
\cref{eq:time_infered_from_rate} gives
\begin{equation}
\widehat t
=
1-\frac{1-\alpha_t}{2 \dot \alpha_t}
\left(
\frac{1}{N}\sum_{i=1}^{N}\hazard_{\theta,t}^{X,(i)}
+
\frac{1}{\binom{N}{2}}
\sum_{1\leq i<j\leq N}\hazard_{\theta,t}^{E,(ij)}
\right).
\label{eq:app-denoiser-time}
\end{equation}
Here, $\hazard_{\theta,t}^{(c)}/a_t\in[0,1]$, so
$\widehat t\in[0,1]$ without clipping.


\subsection{Additional sampling details}
In our experiments, we set the early stopping threshold to 1. Stopping when the predicted graph activity falls below this threshold improves generation quality compared with continuing to edit the graph until the full sampling budget $T$ is exhausted. 



\section{Additional experiments}\label{app:additional_experiments}

\begin{wraptable}{r}{0.4\textwidth}
\centering
\setlength{\tabcolsep}{1.5pt}
\renewcommand{\arraystretch}{1.1}
\vspace{0\baselineskip}
\caption{Parameterization comparison.}
\label{tab:qm9_euler}
\vspace{-1em}
{\scriptsize
\resizebox{\linewidth}{!}{%
\begin{tabular}{llcc|cc|cc}
\toprule
& & \multicolumn{2}{c}{$T=20$}
& \multicolumn{2}{c}{$T=50$}
& \multicolumn{2}{c}{$T=100$} \\
\cmidrule(lr){3-4}\cmidrule(lr){5-6}
\cmidrule(lr){7-8}
Model & Sampler
& Val. $\uparrow$ & FCD $\downarrow$
& Val. $\uparrow$ & FCD $\downarrow$
& Val. $\uparrow$ & FCD $\downarrow$ \\
\midrule
Denoiser & Euler
& {97.0} & 0.75
& 98.9 & 0.28
& 99.4 & 0.14 \\
\textbf{\ours{}} & Euler
& 96.7 & {0.66}
& {99.2} & {0.24}
& {99.5} & {0.13} \\
\midrule
Denoiser & \ours{}
& 98.9 & {0.31}
& 99.0 & 0.21
& 99.0 & {0.11} \\
\textbf{\ours{}} & \ours{}
& {99.2} & 0.37
& {99.1} & {0.10}
& {99.2} & {0.10} \\
\bottomrule
\end{tabular}%
}
\par}
\end{wraptable}
\subsection{Parameterization.}
In the following, we conduct a controlled experiment on QM9 to compare the denoiser parametrization trained with cross-entropy against the hazard–jump parametrization trained with the decomposed objective in \cref{eq:componentwise-decomposed-loss-full}. Both models use identical data splits, a graph-transformer backbone, and matched initialization and optimization. We sample 10K samples using Euler and \ours{}. 
\ours{} sampling substantially improves the validity–FCD trade-off over Euler sampling, particularly at small sampling budgets. Combined with our decomposed parameterization, \ours{} sampling delivers consistently strong performance across the evaluated sampling budgets.

\begin{table}
\centering
\vspace{-1\baselineskip}
\caption{NFE across datasets and sampling steps.}
\label{tab:app_NFE}
\begin{tabular}{@{}lcr@{\,${}\pm{}$\,}l@{}}
\toprule
Dataset & Steps $T$ & \multicolumn{2}{c}{NFE} \\
\midrule
QM9      & $20$  & 19.9 & 0.4 \\
QM9      & $50$  & 22.4 & 2.3 \\
\midrule
MOSES    & $50$  & 35.9 & 7.0 \\
MOSES    & $100$ & 47.1 & 15.7 \\
MOSES    & $200$ & 57.2 & 14.2 \\
\midrule
GuacaMol & $50$  & 40.4 & 8.1 \\
GuacaMol & $100$ & 49.2 & 13.7 \\
GuacaMol & $200$ & 78.8 & 35.6 \\
\bottomrule
\end{tabular}
\end{table}

\subsection{Clock comparisons}
In \cref{fig:time_distortions}, we compare different time distortions, including time distortions from prior work, such as polydec-2 and polydec-10 \citep{qin2025defog,luosimgfm}, as well as our hazard-guided inferred time. We omit the early stopping and hazard annealing here.

\begin{figure}[hptb]
    \centering
    \includegraphics[width=0.4\linewidth]
        {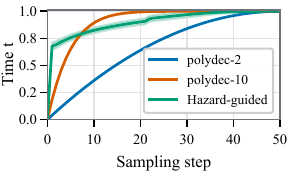}
    \caption{Schedulers.}
    \label{fig:time_distortions}
\end{figure}

\subsection{Early stopping}
As discussed in \cref{sec:sampling}, \ours{} performs early stopping during inference to avoid disrupting node or edge updates that can render the graph invalid. We report the average number of function evaluations (NFE) in \cref{tab:app_NFE} for different benchmarks and budgets $T$.

\subsection{Time annealing}
To moderate large changes in the inferred time, we interpolate between the current model query time \(t_k\) and the hazard-guided target \(\widehat t_k\):
\[
t_{k+1}=(1-\gamma_k)t_k+\gamma_k\widehat t_k,
\qquad \gamma_k\in(0,1],
\]
where smaller \(\gamma_k\) can limit the magnitude of each time update, allowing further graph refinement. This is particularly useful during the initial sampling steps, when changes in the graph can produce substantial fluctuations in the predicted activity. We use a piecewise-constant schedule,
\[
\gamma_k=
\begin{cases}
\gamma_0, & k<K_{\mathrm{prefix}},\\
1, & k\geq K_{\mathrm{prefix}},
\end{cases}
\]
which damps the initial time updates and subsequently adopts the inferred target directly. This interpolation preserves the possibility of backward time updates, with renoising applied whenever \(t_{k+1}<t_k\). We use $\gamma_0=0.1$. For QM9, Moses, and GuacaMol, we set $K_{\mathrm{prefix}}$ to 20$\%$ of the total steps $T$. For Planar and Tree, $K_{\mathrm{prefix}}$ is 30$\%$ of $T$.

\subsection{Sampler comparison}
We provide the full quantitative comparison of the samplers comparisons under the same trained \ours{} model in \cref{tab:hglow_sampler_comparison_full_table}.
\begin{table}
\centering
\setlength{\tabcolsep}{4pt}
\caption{Sampler comparison using the HGLow trained model on MOSES.}
\label{tab:hglow_sampler_comparison_full_table}

\resizebox{0.6\linewidth}{!}{%
\begin{tabular}{clccccc}
\toprule
$T$ & Sampler
& Validity\,$\uparrow$
& Unique\,$\uparrow$ & Novelty\,$\uparrow$ & Filters\,$\uparrow$
& FCD\,$\downarrow$ \\
\midrule

50 & RVF (default)
& 86.4
& {100.0} & 94.6 & {99.1}
& 1.03 \\

50 & DeFoG (default)
& 80.3
& {100.0} & 95.3 & 98.0
& 1.38 \\


50 & HGLow (ours)
& {89.5}
& {100.0} & 94.4 & 99.1
& {0.98} \\

\midrule

100 & RVF (default)
& 87.7
& {100.0} & 94.2 & 99.2
& 1.08 \\

100 & DeFoG (default)
& 85.8
& {100.0} & 90.7 & {99.4}
& 1.38 \\


100 & HGLow (ours)
& {89.9}
& 99.9 & 93.0 & 99.2
& {0.96} \\

\bottomrule
\end{tabular}%
}
\vspace{0\baselineskip}
\end{table}
\begin{table}
\centering
\setlength{\tabcolsep}{3pt} 
\caption{Large molecule generation performance on MOSES and GuacaMol.}
\label{tab:app_large_molecule_gen}
\resizebox{0.9\textwidth}{!}{%
\begin{tabular}{lcc
    *{4}{S[table-format=3.1]}
    *{2}{S[table-format=1.2]}
    S[table-format=2.1]
    |*{5}{S[table-format=3.1]}
}
\toprule
& & & \multicolumn{7}{c|}{MOSES}
& \multicolumn{5}{c}{GuacaMol} \\
\cmidrule(lr){4-10}\cmidrule(lr){11-15}

Model & Class & Steps
& {Val.\,$\uparrow$}
& {Unique.\,$\uparrow$}
& {Novelty\,$\uparrow$}
& {Filters\,$\uparrow$}
& {FCD\,$\downarrow$}
& {SNN\,$\uparrow$}
& {Scaf.\,$\uparrow$}
& {Val.\,$\uparrow$}
& {V.U.\,$\uparrow$}
& {V.U.N.\,$\uparrow$}
& {KLD.\,$\uparrow$}
& {FCD\,$\uparrow$} \\
\midrule

Training set & -- & --
& 100.0 & 100.0 & 0.0 & 100.0 & 0.01 & 0.64 & 99.1
& 100.0 & 100.0 & 0.0 & 99.9 & 92.8 \\

\midrule

DiGress{\tiny~\citep{vignac2022digress}}
& \small{Diff.} & 500
& 85.7 & 100.0 & 95.0 & 97.1 & 1.19 & 0.52 & 14.8
& 85.2 & 85.2 & 85.1 & 92.9 & 68.0 \\




DisCo{\tiny~\citep{xu2024discrete}}
& \small{Diff.} & 500
& 88.3 & 100.0 & 97.7 & 95.6 & 1.44 & 0.50 & 15.1
& 86.6 & 86.6 & 86.5 & 92.6 & 59.7 \\




Cometh{\tiny~\citep{siraudin2024cometh}}
& \small{Diff.} & 500
& 86.1 & 100.0 & 96.0 & 97.3 & 1.27 & 0.50 & 16.4
& 94.4 & 94.4 & 93.5 & 94.1 & 67.4 \\

Cometh-PC{\tiny~\citep{siraudin2024cometh}}
& \small{Diff.} & 500
& 90.5 & 99.9 & 92.6 & 99.1 & 1.27 & 0.54 & 16.0
& 98.9 & 98.9 & 97.6 & 96.7 & 72.7 \\


\midrule

SimGFM{\tiny~\citep{luosimgfm}}
& \small{Flow} & 50
& 86.1 & 100.0 & 95.1 & 98.6 & 1.44 & 0.51 & 16.9
& 98.5 & 98.5 & 98.3 & 93.6 & 77.1 \\

SimGFM{\tiny~\citep{luosimgfm}}
& \small{Flow} & 200
& 89.4 & 100.0 & 94.6 & 98.9 & 1.08 & 0.55 & 17.3
& 99.2 & 99.2 & 98.5 & 94.6 & 76.4 \\

SimGFM{\tiny~\citep{luosimgfm}}
& \small{Flow} & 500
& 91.1 & 100.0 & 93.9 & 99.2 & 1.19 & 0.56 & 14.4
& 99.2 & 99.2 & 98.6 & 95.5 & 73.5 \\

\midrule



DeFoG{\tiny~\citep{qin2025defog}} $(\eta=0,\omega=0)$
& \small{Flow} & 50
& 88.7 & 100.0 & 95.9 & 98.0 & 3.49 & 0.53 & 11.8
& 84.2 & 84.2 & 83.8 & 91.0 & 32.7 \\

DeFoG{\tiny~\citep{qin2025defog}} $(\eta=0,\omega=0)$
& \small{Flow} & 100
& 89.5 & 100.0 & 95.6 & 98.2 & 2.51 & 0.54 & 12.6
& 92.3 & 92.3 & 91.8 & 92.9 & 51.4 \\

DeFoG{\tiny~\citep{qin2025defog}} $(\eta=0,\omega=0)$
& \small{Flow} & 200
& 89.6 & 100.0 & 94.7 & 98.6 & 1.93 & 0.55 & 12.8
& 94.6 & 94.6 & 94.1 & 93.8 & 62.7 \\

DeFoG{\tiny~\citep{qin2025defog}} $(\eta=0,\omega=0)$
& \small{Flow} & 500
& 89.4 & 100.0 & 94.9 & 98.6 & 1.66 & 0.55 & 13.3
& 96.1 & 96.1 & 95.5 & 94.1 & 71.9 \\

\midrule

DeFoG{\tiny~\citep{qin2025defog}}
& \small{Flow} & 50
& 82.9 & 99.9 & 95.1 & 97.8 & 1.79 & 0.53 & 19.8
& 91.7 & 91.7 & 91.2 & 92.3 & 57.9 \\

DeFoG{\tiny~\citep{qin2025defog}}
& \small{Flow} & 100
& 88.5 & 99.9 & 90.7 & 99.2 & 1.75 & 0.58 & 16.7
& 96.7 & 96.7 & 95.7 & 96.0 & 74.2 \\

DeFoG{\tiny~\citep{qin2025defog}}
& \small{Flow} & 200
& 90.9 & 99.9 & 89.7 & 99.3 & 1.67 & 0.58 & 11.9
& 98.0 & 98.0 & 96.9 & 98.3 & 70.1 \\

DeFoG{\tiny~\citep{qin2025defog}}
& \small{Flow} & 500
& 91.5 & 99.9 & 90.0 & 99.2 & 1.47 & 0.55 & 11.5
& 99.0 & 99.0 & 97.9 & 97.7 & 73.8 \\

\midrule

\rowcolor{gray!15}
\textbf{\ours{}-{DeFoG}(Ours)}
& \small{Flow} & 50
& 86.2 & 100.0 & 92.7 & 97.4 & 1.84 & 0.56 & 13.9
& 92.5 & 92.5 & 91.2 & 92.6 & 70.4 \\

\rowcolor{gray!15}
\textbf{\ours{}-{DeFoG} (Ours)}
& \small{Flow} & 100
& 91.1 & 99.9 & 92.0 & 99.2 & 1.46 & 0.57 & 16.7
& 97.9 & 97.9 & 96.3 & 97.0 & 72.4 \\

\rowcolor{gray!15}
\textbf{\ours{}-{DeFoG} (Ours)}
& \small{Flow} & 200
& 91.1 & 100.0 & 93.0 & 99.2 & 1.32 & 0.56 & 14.6
& 99.1 & 99.1 & 97.5 & 97.3 & 74.1 \\

\rowcolor{gray!15}
\textbf{\ours{}-{DeFoG} (Ours)}
& \small{Flow} & 500
& 91.0 & 100.0 & 92.0 & 99.3 & 1.37 & 0.57 & 15.7
& 99.3 & 99.3 & 98.0 & 98.0 & 74.2 \\

\midrule

\rowcolor{gray!15}
\textbf{\ours{} (Ours)}
& \small{Flow} & 50
& 89.5 & 100.0 & 94.4 & 99.1 & 0.98 & 0.55 & 18.8
& 95.2 & 95.2 & 93.6 & 89.9 & 69.3 \\

\rowcolor{gray!15}
\textbf{\ours{} (Ours)}
& \small{Flow} & 100
& 89.9 & 99.9 & 93.0 & 99.2 & 0.96 & 0.56 & 16.0
& 99.2 & 99.2 & 98.1 & 96.9 & 77.7 \\

\rowcolor{gray!15}
\textbf{\ours{} (Ours)}
& \small{Flow} & 200
& 91.9 & 99.9 & 93.1 & 99.4 & 1.08 & 0.58 & 15.5
& 99.4 & 99.4 & 97.9 & 96.8 & 77.3 
\\


\bottomrule
\end{tabular}%
}
\vspace{-1em}
\end{table}

\begin{table}
\centering
\setlength{\tabcolsep}{3pt} 
\vspace{-1\baselineskip}
\caption{ Performance on the synthetic datasets Planar and Tree. AR denotes Autoregressive approaches.
}
\vspace{-1em}
\label{tab:app_synthetic_graph_gen}

\sisetup{
    table-format = 3.1,
    table-space-text-post = {$\stdsmall{0.0}$},
    table-align-text-post = false
}

\resizebox{0.7\linewidth}{!}{%
\begin{tabular}{@{}lcc*{2}{S}|*{2}{S}@{}}
\toprule
& & & \multicolumn{2}{c}{Planar}
& \multicolumn{2}{c}{Tree}
\\
\cmidrule(lr){4-5}\cmidrule(lr){6-7}

Model & Class & Steps
& {V.U.N.\,$\uparrow$} & {Ratio\,$\downarrow$}
& {V.U.N.\,$\uparrow$} & {Ratio\,$\downarrow$}
\\
\midrule

Training set
& {--} & {--}
& 100.0 & 1.0
& 100.0 & 1.0
\\

\midrule

GraphRNN\tiny~\citep{you2018graphrnn}
& \small{AR} & {--}
& 0.0 & 490.2
& 0.0 & 607.0
\\

GRAN\tiny~\citep{liao2019efficient}
& \small{AR} & {--}
& 0.0 & 2.0
& 0.0 & 607.0
\\

BiGG\tiny~\citep{dai2020scalable}
& \small{AR} & {--}
& 5.0 & 16.0
& 75.0 & 5.2
\\

GraphGen\tiny~\citep{goyal2020graphgen}
& \small{AR} & {--}
& 7.5 & 210.3
& 95.0 & 33.2
\\

AUTOGRAPH\tiny~\citep{chen2026flatten}
& \small{AR} & {--}
& 87.5 & 1.5
& {--} & {--}
\\

EDGE\tiny~\citep{chen2023efficient}
& \small{Diff.} & 1000
& 0.0 & 431.4
& 0.0 & 850.7
\\


DiGress{\tiny~\citep{vignac2022digress}}
& \small{Diff.} & 1000
& 77.5 & 5.1
& 90.0 & 1.6
\\

HSpectre\tiny~\citep{bergmeister2024efficient}
& \small{Diff.} & {--}
& 95.0 & 2.1
& 100.0 & 4.0
\\

GruM\tiny~\citep{jo2023graph}
& \small{Diff.} & {--}
& 90.0 & 1.8
& {--} & {--}
\\

DisCo{\tiny~\citep{xu2024discrete}}
& \small{Diff.} & 500
& 83.6 & {--}
& {--} & {--}
\\

Cometh{\tiny~\citep{siraudin2024cometh}}
& \small{Diff.} & 500
& 92.5 & {--}
& {--} & {--}
\\

Cometh-PC{\tiny~\citep{siraudin2024cometh}}
& \small{Diff.} & {--}
& 99.5 & {--}
& {--} & {--}
\\

CatFlow\tiny~\citep{eijkelboom2024variational}
& \small{Flow} & {--}
& 80.0 & {--}
& {--} & {--}
\\

\midrule

SimGFM{\tiny~\citep{luosimgfm}}
& \small{Flow} & 20
& 94.0 {$\stdsmall{4.4}$} & 2.3 {$\stdsmall{0.6}$}
& 88.0 {$\stdsmall{4.8}$} & 2.5 {$\stdsmall{0.9}$}
\\

SimGFM{\tiny~\citep{luosimgfm}}
& \small{Flow} & 50
& 99.5 {$\stdsmall{1.0}$} & 1.8 {$\stdsmall{0.5}$}
& 97.0 {$\stdsmall{1.0}$} & 2.0 {$\stdsmall{0.7}$}
\\

SimGFM{\tiny~\citep{luosimgfm}}
& \small{Flow} & 200
& 100.0 {$\stdsmall{0.0}$} & 9.3 {$\stdsmall{2.6}$}
& 99.5 {$\stdsmall{1.0}$}
& 1.5 {$\stdsmall{0.2}$}
\\

SimGFM{\tiny~\citep{luosimgfm}}
& \small{Flow} & 1000
& 100.0 {$\stdsmall{0.0}$} & 21.0 {$\stdsmall{1.5}$}
& 100.0 {$\stdsmall{0.0}$} & 2.1 {$\stdsmall{0.8}$}
\\

\midrule

DeFoG{\tiny~\citep{qin2025defog}}
& \small{Flow} & 50
& 95.0 {$\stdsmall{3.2}$} & 3.2 {$\stdsmall{1.1}$}
& 73.5 {$\stdsmall{9.0}$} & 2.5 {$\stdsmall{1.0}$}
\\

DeFoG{\tiny~\citep{qin2025defog}}
& \small{Flow} & 1000
& 99.5 {$\stdsmall{1.0}$}
& 1.6 {$\stdsmall{0.4}$}
& 96.5 {$\stdsmall{2.6}$}
& 1.6 {$\stdsmall{0.4}$}
\\

\midrule

\rowcolor{gray!15}
\textbf{\ours{} (Ours)}
& \small{Flow} & 20
& 95.0 {$\stdsmall{3.5}$} & 1.9 {$\stdsmall{0.5}$}
& 93.5 {$\stdsmall{4.6}$} & 2.2 {$\stdsmall{0.8}$}
\\

\rowcolor{gray!15}
\textbf{\ours{} (Ours)}
& \small{Flow} & 50
& 99.0 {$\stdsmall{1.2}$} & 1.9 {$\stdsmall{0.4}$}
& 98.0 {$\stdsmall{2.9}$} & 2.7 {$\stdsmall{0.9}$}
\\

\rowcolor{gray!15}
\textbf{\ours{} (Ours)}
& \small{Flow} & 100
& 99.0 {$\stdsmall{1.2}$} & 1.8 {$\stdsmall{0.3}$}
& 98.5 {$\stdsmall{2.0}$} & 1.8 {$\stdsmall{0.4}$}
\\

\rowcolor{gray!15}
\textbf{\ours{} (Ours)}
& \small{Flow} & 200
& 100.0 {$\stdsmall{0.0}$} & 2.4 {$\stdsmall{0.2}$}
& 100.0 {$\stdsmall{0.0}$} & 2.0 {$\stdsmall{0.8}$}
\\

\bottomrule
\end{tabular}%
}
\end{table}

\subsection{Accelerating pretrained models}
In \cref{tab:app_large_molecule_gen}, we provide the full quantitative comparison of DeFoG (base), without added stochasticity and target guidance during inference, DeFoG, using the sampling hyperparameters as reported in \citet{qin2025defog}, \ours{}-DeFoG, which applies the \ours{} hazard-guided sampler to accelerate the pretrained base DeFoG model. We further include our \ours{} DFM model.

\begin{algorithm}[htpb]
\caption{\ours{}: Parallel hazard-guided sampling}
\label{alg:hazard-parallel}
\begin{algorithmic}[1]
\State \textbf{Input:} model $f_\theta$, budget $T$,
maximum parallel updates $B$, threshold $T_{\mathrm{stop}}>0$ 
\State Initialize $G\sim p_0$, $t\gets0$, $k\gets0$
\While{$k<T$}
    \State Construct rates $\boldsymbol{\mathrm{R}}_{\theta,t}^{(c)}$ from $(f_\theta,G,t)$ and obtain
        $\{(\hazard_{\theta,t}^{(c)},\boldsymbol{\jump}_{\theta,t}^{(c)})\}_{c\in\mathcal C}$

    \State $H_{\theta,t}\gets\sum_{c\in\mathcal C}\hazard_{\theta,t}^{(c)}$
    \State \textbf{if} $H<T_{\mathrm{stop}}$
        \textbf{then} \Return $G$

    \State $\mathcal C_+\gets
        \{c\in\mathcal C:\hazard_{\theta,t}^{(c)}>0\}$,
        $m\gets\min(B,|\mathcal C_+|)$
    \State Select $S\subseteq\mathcal C_+$, $|S|=m$,
        without replacement with weights
        $\{\hazard_{\theta,t}^{(c)}\}_{c\in\mathcal C_+}$

    \State Sample
        $\tilde{z}^{(c)}\sim{\jump}_{\theta,t}^{(c)}(\cdot \mid z^{(c); G})$ for all $c\in S$
    \State $G\gets\operatorname{Update}
        (G,S,\{\tilde{z}^{(c)}\}_{c\in S})$

    \State $
    t\gets
        1-\frac{1-\alpha_t}{2\dot{\alpha}_t}\left(
            \overline{\hazard}_{\theta,t}^{X}
            +
            \overline{\hazard}_{\theta,t}^{E}
            \right)
    $
    \State $k\gets k+1$
\EndWhile
\State \Return $G$
\end{algorithmic}
\end{algorithm}
\subsection{Synthetic datasets}
We summarize the parallel \ours{} sampler in \cref{alg:hazard-parallel}. For planar and tree, we set $B$ to 1024 $T=20$, and use $B=512$ for $T=50$, and $B=128$ for $T=100$ and $T=200$.
We compare \ours{} to autoregressive, diffusion, and flow-based approaches in \cref{tab:app_synthetic_graph_gen}.

\section{Qualitative results}\label{app:qualitative_results}
\begin{figure}[H]
    \centering

    \begin{subfigure}{\textwidth}
        \centering
        \includegraphics[width=\textwidth]{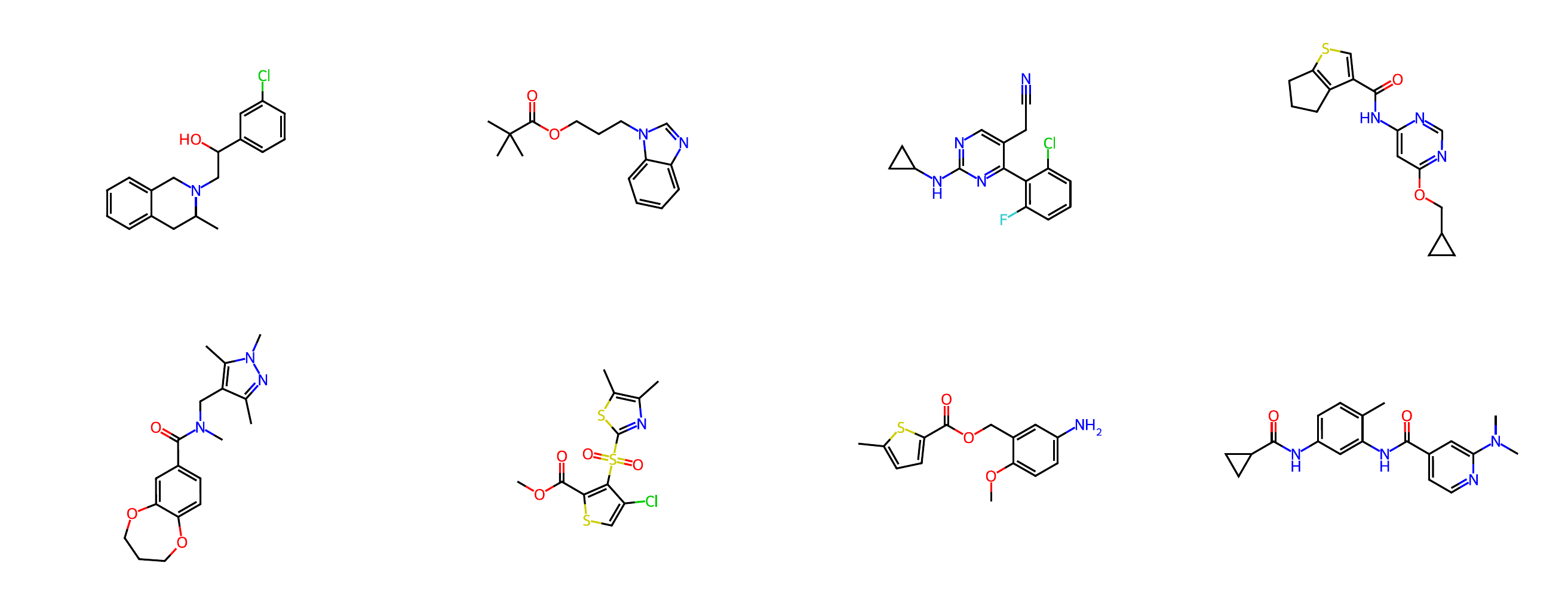}
        \caption{MOSES, $T=50$.}
    \end{subfigure}

    \vspace{0.8em}

    \begin{subfigure}{\textwidth}
        \centering
        \includegraphics[width=\textwidth]{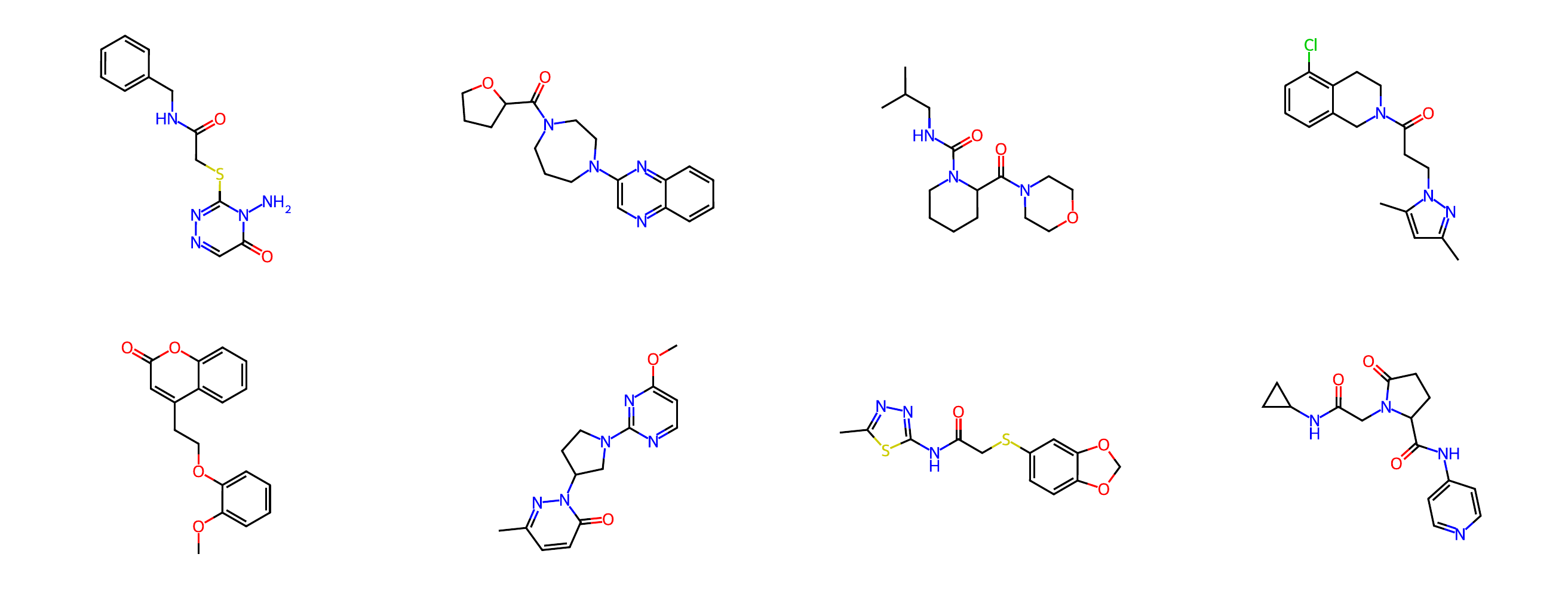}
        \caption{MOSES, $T=100$.}
    \end{subfigure}

    \vspace{0.8em}

    \begin{subfigure}{\textwidth}
        \centering
        \includegraphics[width=\textwidth]{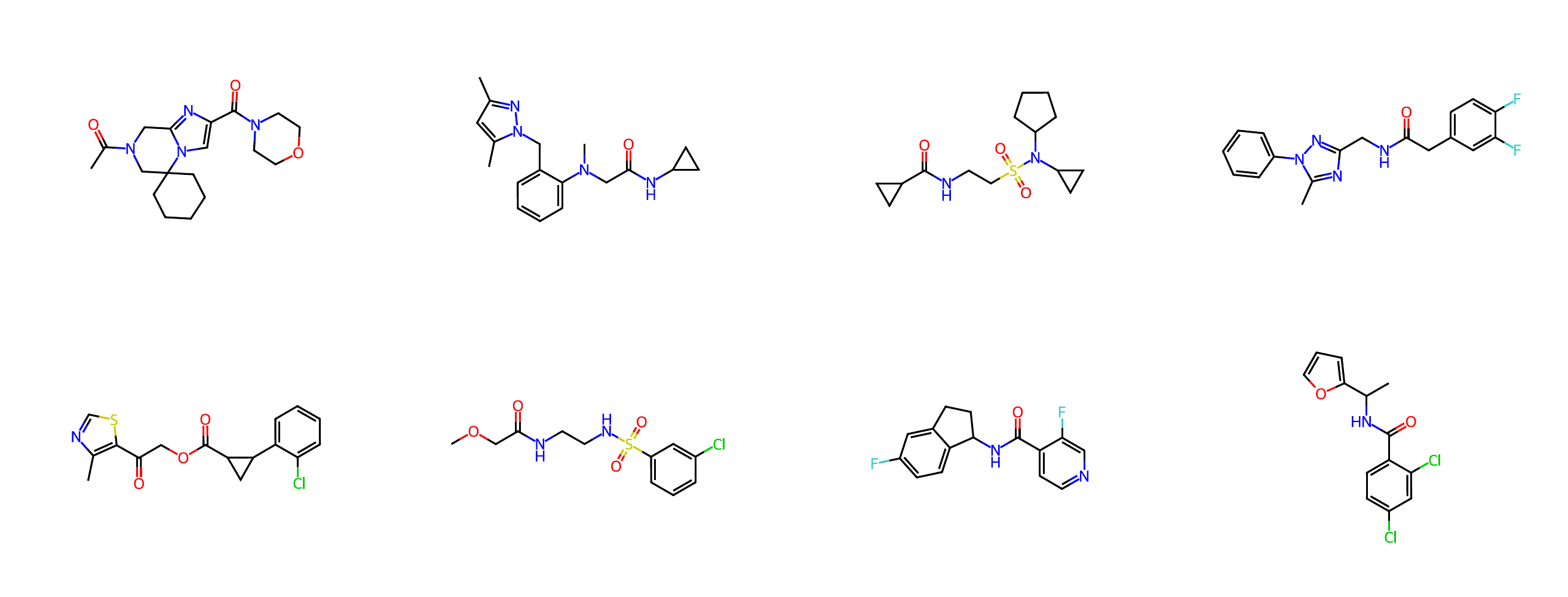}
        \caption{MOSES, $T=200$.}
    \end{subfigure}

    \caption{Uncurated sets of molecular graphs generated by \ours{} for the MOSES dataset for $T \in \{50,100,200\}$ steps.}
\end{figure}
\begin{figure}[]
    \centering

    \begin{subfigure}{\textwidth}
        \centering
        \includegraphics[width=\textwidth]{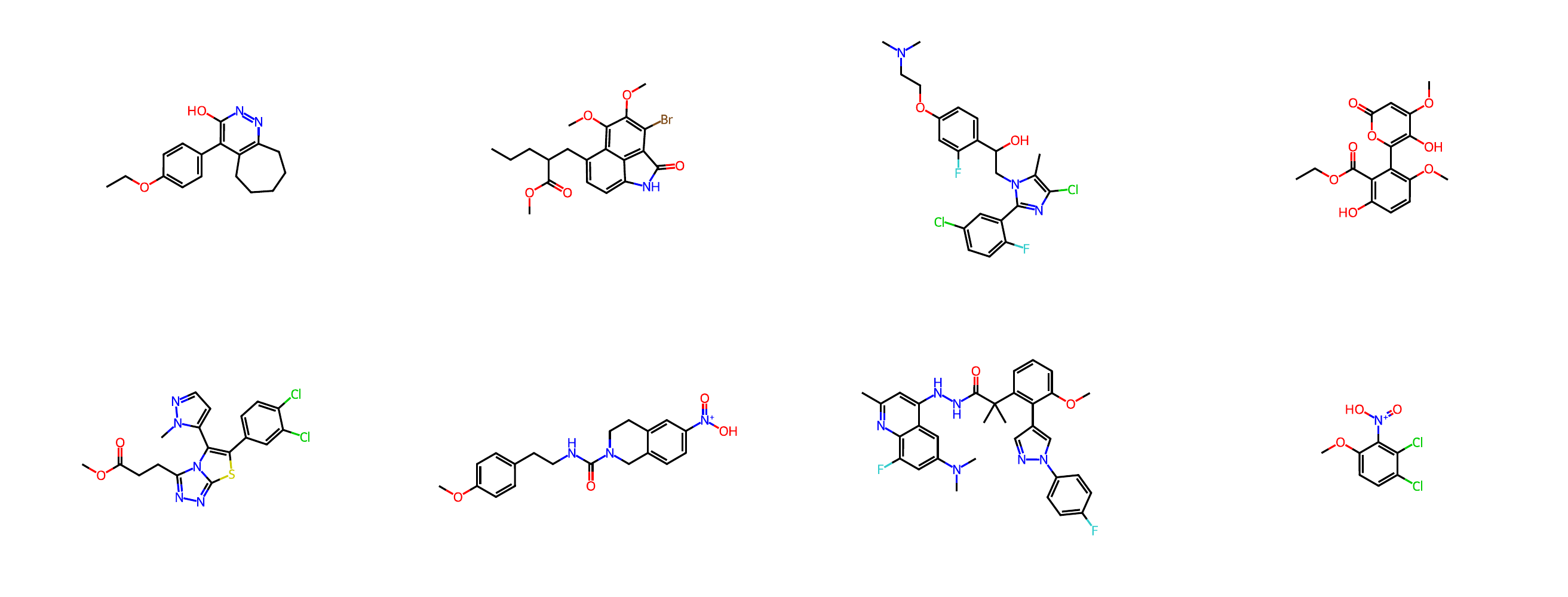}
        \caption{guacamol, $T=50$.}
    \end{subfigure}

    \vspace{0.8em}

    \begin{subfigure}{\textwidth}
        \centering
        \includegraphics[width=\textwidth]{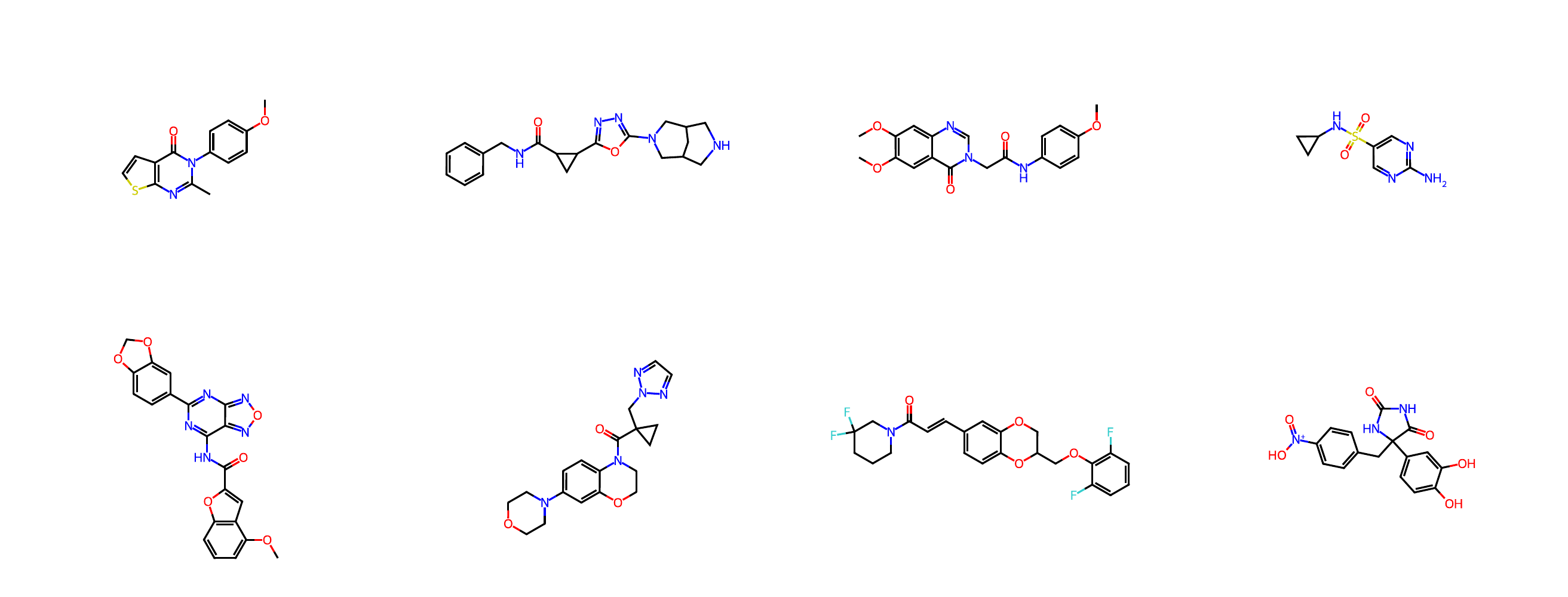}
        \caption{guacamol, $T=100$.}
    \end{subfigure}

    \vspace{0.8em}

    \begin{subfigure}{\textwidth}
        \centering
        \includegraphics[width=\textwidth]{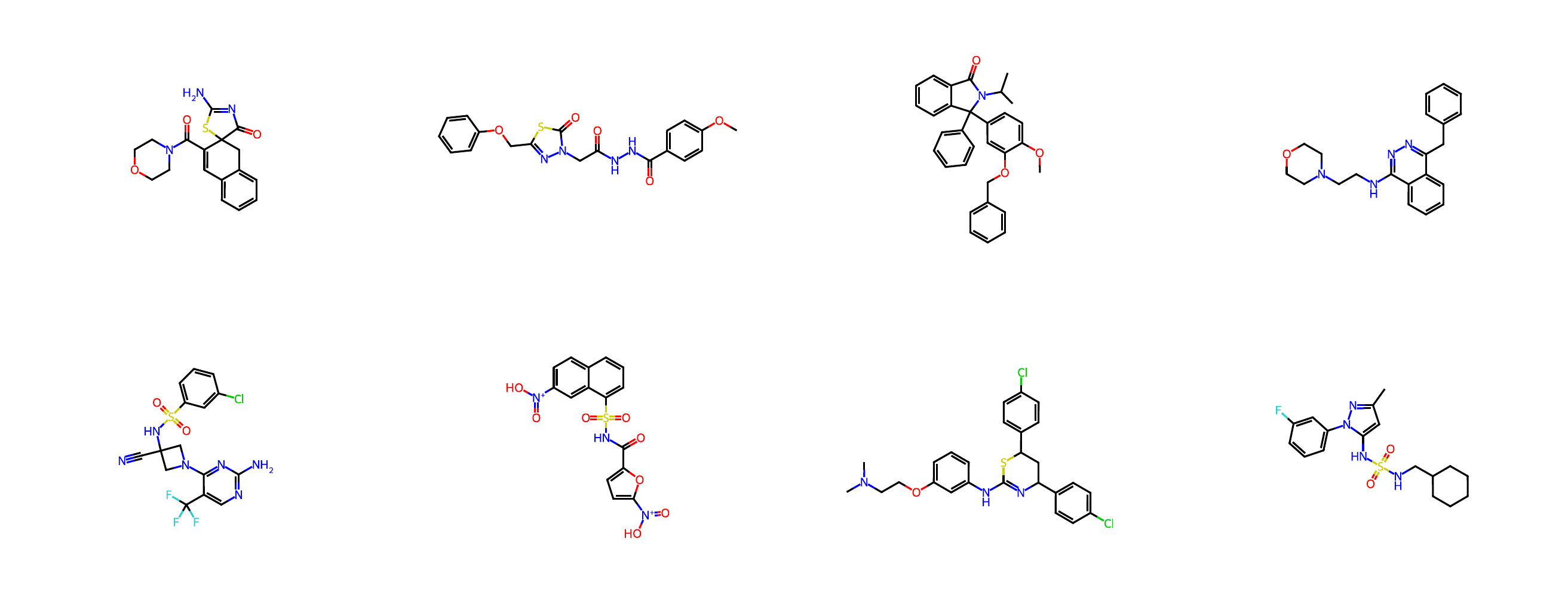}
        \caption{guacamol, $T=200$.}
    \end{subfigure}

    \caption{Uncurated sets of molecular graphs generated by \ours{} for the Guacamol dataset for $T \in \{50,100,200\}$ steps.}
    \label{fig:molecular_samples_guacamol}
\end{figure}
\begin{figure}[H]
    \centering

    \begin{subfigure}{\textwidth}
        \centering
        \includegraphics[width=\textwidth]{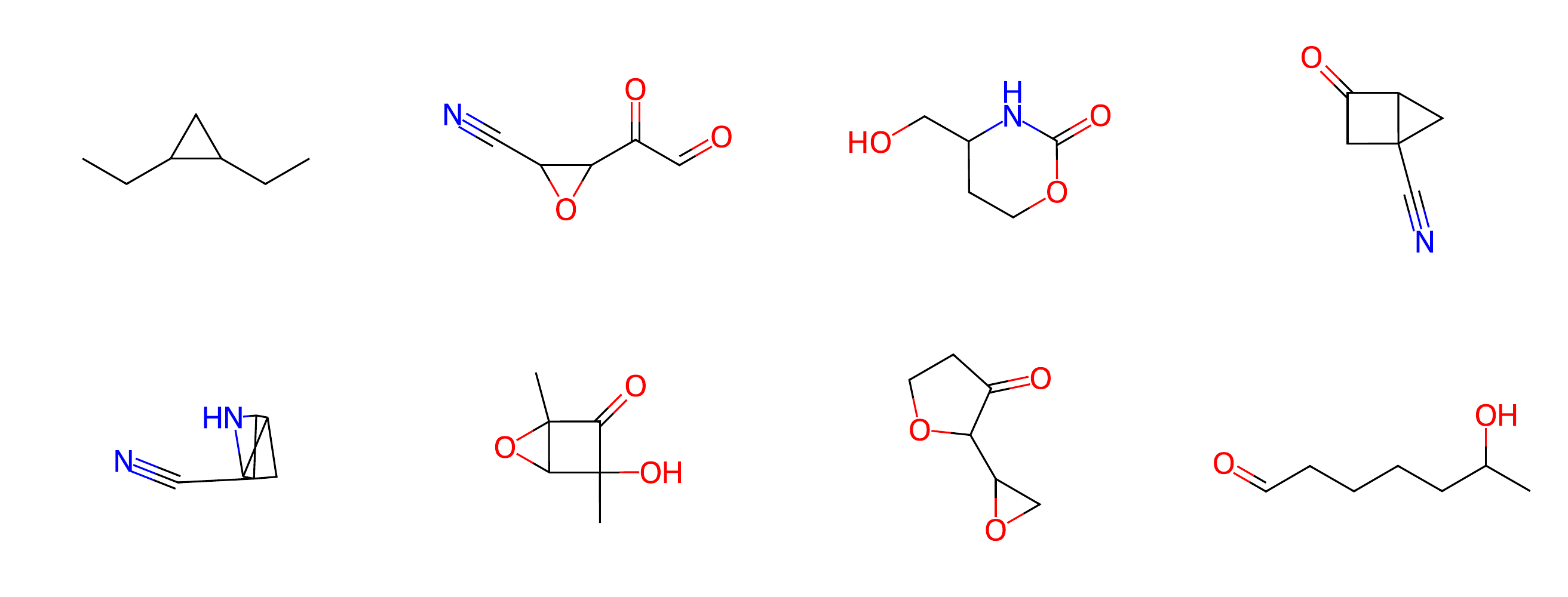}
        \caption{QM9, $T=20$.}
    \end{subfigure}

    \vspace{0.8em}

    \begin{subfigure}{\textwidth}
        \centering
        \includegraphics[width=\textwidth]{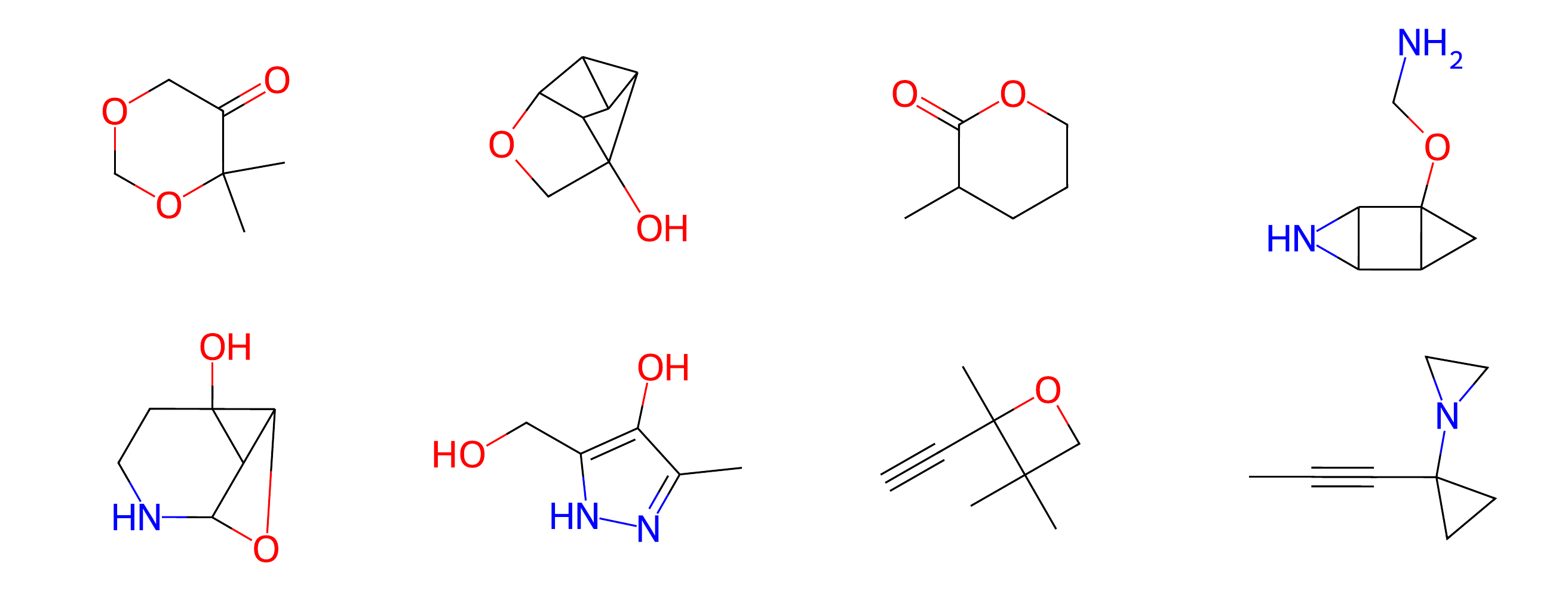}
        \caption{QM9, $T=50$.}
    \end{subfigure}

    \caption{Uncurated sets of molecular graphs generated by \ours{} for the QM9 dataset for $T \in \{20,50\}$ steps.}
\end{figure}
\begin{figure}[H]
    \centering

    \begin{subfigure}{\textwidth}
        \centering
        \includegraphics[width=0.7\textwidth]{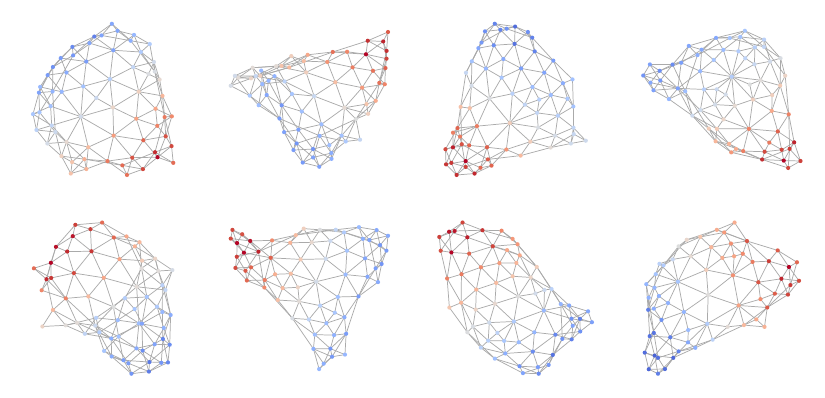}
        \caption{Planar, $T=20$.}
    \end{subfigure}

    \vspace{0.2em}

    \begin{subfigure}{\textwidth}
        \centering
        \includegraphics[width=0.7\textwidth]{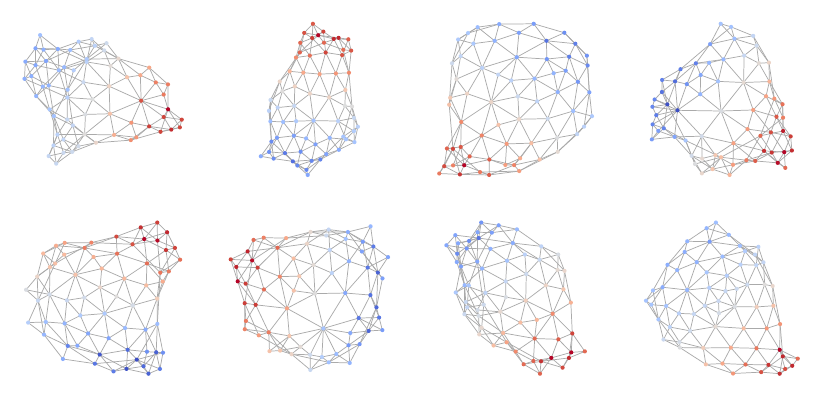}
        \caption{Planar, $T=50$.}
    \end{subfigure}

    \vspace{0.2em}

    \begin{subfigure}{\textwidth}
        \centering
        \includegraphics[width=0.7\textwidth]{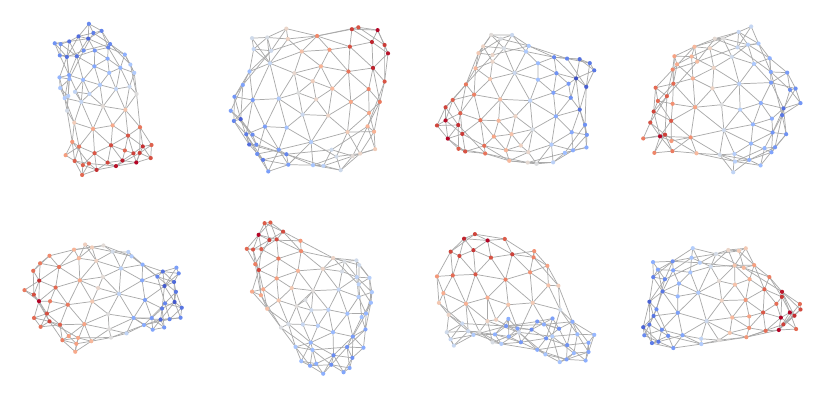}
        \caption{Planar, $T=100$.}
    \end{subfigure}

    \vspace{0.2em}

    \begin{subfigure}{\textwidth}
        \centering
        \includegraphics[width=0.7\textwidth]{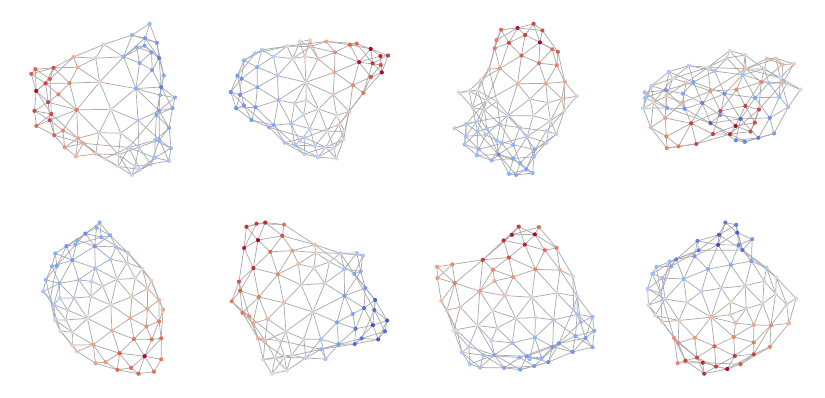}
        \caption{Planar, $T=200$.}
    \end{subfigure}

    \caption{Uncurated sets of planar graphs generated by \ours{} for the Planar dataset for $T \in \{20,50,100,200\}$ steps.}
\end{figure}

\end{document}